\newcommand{\lam}{\ensuremath{{\lambda}}}
\newcommand{\hbet}{\ensuremath{\hat \beta(\lam)}}

\newcommand{\X}{\ensuremath{{\mathbf{X}}}}

\newcommand{\Y}{\ensuremath{{Y}}}
\newcommand{\e}{\ensuremath{{\epsilon}}}

\newcommand{\minbeta}{\ensuremath{{M(\truebeta)}}}

\newcommand{\Xa}{\ensuremath{{X_S}}}
\newcommand{\XaT}{\ensuremath{{X_S^T}}}

\newcommand{\XbT}{\ensuremath{{X_{S^c}^T}}}

\newcommand{\truebeta}{\ensuremath{{\beta^*}}}
\newcommand{\truebetaj}{\ensuremath{{\beta_j^*}}}

\newcommand{\betA}{ {\beta_S^*}}

\newcommand{\hbetA}{\ensuremath{\hat{\beta}^{(1)}}}

\documentclass[aos,preprint,noinf]{imsart}
\usepackage{geometry}                % See geometry.pdf to learn the layout options. There are lots.
\RequirePackage[OT1]{fontenc}
\RequirePackage{amsthm,amsmath}
\RequirePackage{natbib}
\RequirePackage[colorlinks,citecolor=blue,urlcolor=blue]{hyperref}
\usepackage{graphicx}
\usepackage{amsthm}
\usepackage{amssymb}
\usepackage{subfigure}
\usepackage{ulem}
\usepackage{color}
\usepackage{enumerate}

\newcommand{\R}{\ensuremath{{\mathbb{R}}}}

%\RequirePackage[colorlinks,citecolor=blue,urlcolor=blue]{hyperref}

% settings
%\pubyear{2005}
%\volume{0}
%\issue{0}
%\firstpage{1}
%\lastpage{8}
%\arxiv{arXiv:0000.0000}

\startlocaldefs
%\numberwithin{equation}
%\theoremstyle{plain}
\newtheorem{theorem}{Theorem}
\newtheorem{lemma}{Lemma}
\newtheorem{definition}{Definition}
\newtheorem{property}{Property}
\endlocaldefs

\begin{document}

\begin{frontmatter}
\title{On  Pattern Recovery of the Fused Lasso \thanksref{T0}}
\runtitle{Fused Lasso}

\thankstext{T0}{Supported partially by  the National Basic Research Program of China (973 Program 2011CB809105), NSFC-61121002, NSFC-11101005, DPHEC-20110001120113, and MSRA. }

\begin{aug}
\author{\fnms{Junyang} \snm{Qian }\thanksref{t1,t2}\ead[label=e1]{junyangq@sas.upenn.edu}}
\and
\author{\fnms{Jinzhu} \snm{Jia}\thanksref{t1}\ead[label=e2]{jzjia@math.pku.edu.cn}}

%\author{\fnms{?} \snm{?}%\thanksref{t2}
%\ead[label=e3]{?@?}}

%\thankstext{t2}{ }

\runauthor{J. Qian et al.}

\affiliation{%School of Mathematical Sciences and Center of Statistical Science,\\
Peking University \thanksmark{t1} and University of Pennsylvania  \thanksmark{t2}}

\address{Department of Mathematics \\
University of Pennsylvania \\
Philadelphia, PA 19104. \\
\printead{e1} \\
}

\address{School of Mathematical Sciences \\
and Center for Statistical Science, LMAM, LMEQF\\
         Peking University\\
         Beijing 100871, China \\
\printead{e2}\\
}

\end{aug}

\begin{abstract}
We study the property of the {Fused Lasso Signal Approximator (FLSA)} for estimating a blocky signal sequence with additive noise. We transform the {FLSA} to an ordinary Lasso problem. By studying the property of the design matrix in the transformed Lasso problem, {we find that the irrepresentable condition might not hold, in which case we show that the {FLSA} might not be able to recover the signal pattern.} We then apply the newly developed preconditioning method -- Puffer {T}ransformation \citep{jia2012preconditioning} on the transformed Lasso problem. We call the new method {the} {\textit {preconditioned fused Lasso}} and  we {give} non-asymptotic results for {this method}. Results show that when the signal jump strength (signal {difference} between two {neighboring} groups) is big and the noise level is small, our {preconditioned fused Lasso estimator} gives the correct pattern with high probability. Theoretical results give {insight} on what controls the signal pattern recovery ability -- it is the noise level {instead of} the length of the sequence.  Simulations confirm our theorems and show significant {improvement} of the preconditioned {fused} Lasso estimator over the vanilla {FLSA}.
\end{abstract}

%\begin{keyword}[class=AMS]
%\kwd[Primary ]{60K35}
%\kwd{60K35}
%\kwd[; secondary ]{60K35}
%\end{keyword}

\begin{keyword}
\kwd{Fused Lasso}
\kwd{Non-asymptotic}
\kwd{Pattern recovery}
\kwd{Preconditioning}
\end{keyword}

\end{frontmatter}

\section{Introduction}
{Assume we have a sequence of signals $(y_1,y_2,\ldots, y_n)$ and it follows the linear model
\begin{equation}
\label{eqn:model}
y_i = \mu_i^* + \epsilon_i, \quad i = 1,2,\ldots, n,
\end{equation}
where $Y = (y_1,\ldots,y_n)^T \in \R^n$ is the observed signal vector, $\mu^* = (\mu_1,\ldots,\mu_n)^T\in \R^n$ the expected signal, and $\epsilon = (\epsilon_1,\ldots,\epsilon_n)^T$ is the white noise that is assumed to be i.i.d.\ and each has a normal distribution with mean $0$ and variance $\sigma^2$. The model is assumed to be sparse in the sense that the signals come in blocks and only a few of the blocks are nonzero. To be exact, there exists a partition of $\{1,2,\ldots,n\} = \cup_{j=1}^J [L_j,U_j]$ with $L_1 = 1, U_J = n, U_j\geq L_j, L_{j+1} = U_j +1$, and the following stepwise function holds:
\[\mu_i^* = \nu_j^* \mathbf{1}_{L_j\le \mu_i\leq U_j},\] with $\nu_j^*, L_j, U_j$ fixed but unknown. We also assume that the vector $\nu = (\nu_1,\nu_2,\ldots,\nu_n)$ is sparse, meaning that only a few of $\nu_j$'s are nonzeros. We point out that the Gaussian noise is not necessary. But we still use it to study the {insight} of the fused Lasso. The variance {$\sigma^2$ of $\epsilon_i$ is the measure of noise level and} does not have to be a constant here. For a lot of real data problem, each observation of $y_i$ can be an average of multiple measurements and so $\sigma^2$ decreases when the number of measurements increases.

This model {featured by blockiness and sparseness} has many applications. For example, in tumor studies, based on the Comparative Genomic Hybridization(CGH) data, it can be used to {automatically} detect the gains and losses in DNA copies by taking ``signals" as the {log-ratio between} the number of DNA copies in the tumor cells and that in the reference cells \citep{tibshirani2008spatial}. }

One way to estimate the unknown parameters is via {the Fused Lasso Signal Approximator (FLSA)} defined as follows \citep{tibshirani2004sparsity, friedman2007pathwise}:

\begin{equation}
\label{eqn:fusedLasso}
\hat\mu (\lambda_1,\lambda_2)= {\mathop{\text{argmin}} \limits_{\mu}}  \frac{1}{2}\|Y-\mu\|^2_2 + \lambda_1\|\mu\|_1 + \lambda_2\|\mu\|_{TV},
\end{equation}
where $\|\mu\|_1 = \sum_{i=1}^n |\mu_i|$,$\|\mu\|_2^2 = \sum_{i=1}^n \mu_i^2$ and $\|\mu\|_{TV} = \sum_{i=1}^{n-1} |\mu_{i+1} - \mu_i|$. The {$L_1$-}norm regularization controls the sparsity of the signal and the total variation {seminorm} ($\|\mu\|_{TV}$) regularization controls the number of blocks (partitions or groups).

{Figure \ref{fig:gene} gives one example of {the} signal sequence and the {FLSA estimate} on CGH data.  More details and examples can be seen in \cite{tibshirani2008spatial}.}

\begin{figure}[h!]
\begin{center}
  \includegraphics[scale = 0.6]{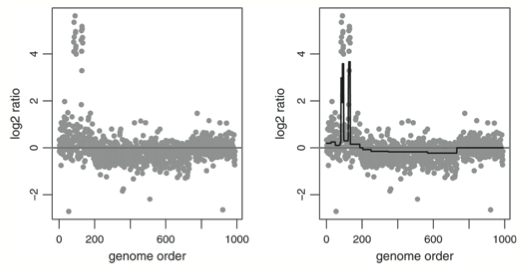}
  \caption{\label{fig:gene}This figure is from \cite{tibshirani2008spatial}. {The fused} Lasso is applied to some CGH data. The data are shown in the left panel, and the solid line in the right panel represents {the estimated signals by the fused} Lasso. The horizontal  line is for $y = 0$.
}
\end{center}
\end{figure}

One important question for the FLSA is how good the estimator defined in Equation \eqref{eqn:fusedLasso} is. {We analyze in this paper} if the FLSA can recover the ``stepwise pattern" or not. We also {try to} answer the following question: what do we do if the FLSA does not recover the ``stepwise pattern"? To measure how good an estimator is{, we introduce the following definition of Pattern Recovery.}

\begin{definition}[Pattern Recovery]
  An FLSA {solution} $\hat\mu(\lambda_{1n},\lambda_{2n})$  recovers the signal pattern if and only if there exists $\lambda_{1n}$ and $\lambda_{2n}$, such that
\begin{equation}
\label{eqn:js}
 sign(\hat \mu_{i+1}(\lambda_{1n},\lambda_{2n}) - \hat \mu_{i}(\lambda_{1n},\lambda_{2n}))  = sign(\mu_{i+1} - \mu_i), \quad {i = 1, \ldots, n-1}.
\end{equation}
\end{definition}

We use $\hat\mu =_{js} \mu$ to shortly denote Equation \eqref{eqn:js}{ ($js$ is the acronym for $jump\ sign$)}. {The FLSA} with the property of pattern recovery means that it can be used to identify both the groups and jump directions (up or down) between groups.

The concept of pattern recovery is very similar to sign recovery of  the Lasso. In fact, some simple calculations in Section \ref{sec:FSLA-LASSO} tell us that {the} pattern recovery property of {the FLSA} can be transformed to {the} sign recovery property of the Lasso estimator.

For observation pairs $(x_i,y_i), i =1,2,\ldots,n$ with ${x_i}\in \R^p$ and $y_i\in \R$, the Lasso is defined as follows \citep{tibshirani1996regression}.
\begin{equation*}
\hat\beta(\lambda) = {\mathop{\text{argmin}}\limits_{\beta}} \frac{1}{2} \sum_{i=1}^n (y_i - x_i^T\beta)^2 + \lambda \|\beta\|_1.
\end{equation*}
Equivalently, in matrix form,
\begin{equation}
\label{eqn:Lasso}
\hat\beta(\lambda) = {\mathop{\text{argmin}}\limits_{\beta}} \frac{1}{2} \|Y - X\beta\|^2 + \lambda \|\beta\|_1,
\end{equation}
where $Y = (y_1,\ldots,y_n)^T$ and $X \in \R^{n\times p}$ with ${x_i^T}$ as its $i$th row. We use $X_j$ to denote the $j$th column of $X$.

Sign {Recovery} of the Lasso estimator is defined as follows.
\begin{definition}[Sign Recovery]
Suppose that data $(X,Y)$ {follow} a linear model: $Y = X\beta^* + \epsilon$, where $Y = (y_1,\ldots,y_n)^T$, $X \in \R^{n\times p}$ with ${x_i^T}$ as its $i$th row, $\beta^*\in \R^{p\times 1}$ and $\epsilon = (\epsilon_1,\ldots,\epsilon_n)^T \in \R^{n\times 1}$ with $E(\epsilon_i) = 0$. A Lasso estimator $\hat\beta(\lambda_{n})$ has the sign recovery property if and only if there exists $\lambda_{n}$  such that
\begin{equation}
\label{eqn:seq}
 {sign(\hat \beta_j(\lambda_{n}))}   = sign(\beta_j^*), \quad {j = 1, \ldots, p}.
\end{equation}
\end{definition}

We will use $\hat\beta =_{s} \beta^*$ to shortly denote $sign(\hat \beta_j(\lambda_{n}){ )} = sign(\beta_j^*),\ j = 1, \ldots, p.$ The Lasso estimator with {the} sign recovery property  implies that it selects the correct set of predictors.  If $P(\hat\beta(\lambda_{n}) =_{s} \beta)\rightarrow 1$, as {the} sample size $n\rightarrow \infty$, we say that $\hat\beta(\lambda_{n})$ is sign consistent.

A rich theoretical literature has studied the consistency of the Lasso, highlighting several potential pitfalls   \citep{knight2000asymptotics, fan2001variable, greenshtein2004persistence, donoho2006stable, meinshausen2006high, tropp2006just, zhao2006model, zhang2008sparsity, wainwright2009}.   The sign consistency of the Lasso  requires the irrepresentable condition, a stringent assumption on the design matrix \citep{zhao2006model}.  Now it is well understood that if the design matrix  violates {the} irrepresentable condition, {the Lasso} will perform poorly and the estimation performance will not {be improved} by increasing the sample size.

Our study {of} the pattern recovery of the {FLSA} begins with a transformation that changes the FLSA to a special {Lasso problem. The data defined in the transformed Lasso problem has correlated {noise} terms instead of independent { ones}. We prove that even for {the} linear model with correlated noise, the irrepresentable condition is still necessary for sign consistency.}  We then analyze the property of the design matrix in the transformed Lasso problem.  We give necessary and sufficient condition such that the design matrix in the transformed Lasso problem satisfies the irrepresentable condition.  We show that,  for a special class of models (with special designed stepwise function on $\mu_i^*$),  the irrepresentable condition holds.  For other {signal patterns}, the irrepresentable condition does not hold and thus the FLSA may fail to keep consistent.  A recent paper ``Preconditioning to comply with the irrepresentable condition" by \cite{jia2012preconditioning} {shows} that a Puffer Transformation will improve the Lasso and make the Lasso estimator sign consistent under some mild conditions. We apply this technique, {propose the \textit{preconditioned fused Lasso}} and show that it improves the {FLSA} {and} recovers the signal pattern with high probability.

{ In \cite{rinaldo2009properties}, the author also considers the consistency conditions for the FLSA. They showed that under some  conditions, the FLSA can be consistent both in block reconstruction and model selection. %A variation called the fused adaptive Lasso is proposed there. This method separates the estimation step from the block recovery step in the sense that it identifies the blocks only through the fusion regularization parameter $\lambda_2$ and does the thresholding {depending on} the sparsity regularization parameter $\lambda_1$ and the estimated block sizes. That frees the intertwine of $\lambda_1$ \text{and } $\lambda_2$ and leads to milder condition under which the estimator is consistent.
The author says in \cite{rinaldo2009properties} that the asymptotic results may have little guidance to the practical performance when $n$ is finite. However, our method, {as we will see}, can not only provide mild {conditions for the estimator to be consistent in block recovery but also give} an explicit non-asymptotic lower bound on the probability that the true blocks are recovered. Numerical simulations also illustrate that in many cases our method turns out to be more effective in block recovery.}

The rest of the paper is organized as follows. In Section 2, we  transform the {FLSA} problem into a Lasso problem and analyze the property of the design matrix in the transformed Lasso problem.  Section 3 illustrates when the {FLSA} can recover the signal pattern and when it cannot.  In Section 4, we propose a new algorithm called the preconditioned fused Lasso that improves {the} FLSA by the technique of Puffer {Transformation} (defined in Equation \eqref{eqn:PT}). We show that for a wide range designs of the stepwise function on $\mu^*$, {this algorithm} can recover the signal pattern with high probability.  In Section 5, simulations are implemented to compare the performances between the preconditioned fused Lasso and the vanilla {FLSA}. Section 6 concludes the paper. Some proofs are given in the appendix.

\section{{FLSA} and the Lasso}
\label{sec:FSLA-LASSO}
{ We turn the FLSA problem into a Lasso problem by change of variables.}
Define {the} soft thresholding function $SH_{\lambda}(x)$ as
\[
SH_{\lambda}(x)=\left\{
   \begin{array}{cc}
      x + \lambda & x < -\lambda \\
      0 & -\lambda\leq x \leq \lambda \\
      x - \lambda & x > \lambda \\
    \end{array}
         \right.
.\]

Let $\hat\mu(\lambda_1,\lambda_2)$ be the fused Lasso estimator defined in Equation \eqref{eqn:fusedLasso}. We have the following result.

\begin{lemma}
\label{lemma:lam1=0}
\[\hat\mu(\lambda_1,\lambda_2) = SH_{\lambda_1}(\hat\mu(0,\lambda_2) )\]
\end{lemma}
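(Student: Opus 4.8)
The plan is to verify that $SH_{\lambda_1}(\hat\mu(0,\lambda_2))$ satisfies the first-order subgradient optimality condition for the full objective in \eqref{eqn:fusedLasso}. Since the quadratic term $\tfrac12\|Y-\mu\|_2^2$ is strictly convex and both penalties are convex, the objective is strictly convex, so its minimizer is unique and is completely characterized by the stationarity condition. It therefore suffices to exhibit valid subgradients of the two penalties that make this condition hold at the soft-thresholded point.

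First I would record the optimality condition for the $\lambda_1=0$ problem. Writing $\tilde\mu := \hat\mu(0,\lambda_2)$ and letting $D\in\R^{(n-1)\times n}$ be the first-difference matrix so that $\|\mu\|_{TV}=\|D\mu\|_1$, optimality of $\tilde\mu$ supplies a vector $z\in\R^{n-1}$ with $z_j\in\partial|(D\tilde\mu)_j|$ (that is, $z_j=\mathrm{sign}(\tilde\mu_{j+1}-\tilde\mu_j)$ when this difference is nonzero and $z_j\in[-1,1]$ otherwise) such that $Y-\tilde\mu=\lambda_2 D^Tz$. Next I would pass to the soft-thresholded point coordinatewise. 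Setting $\hat\mu_i=SH_{\lambda_1}(\tilde\mu_i)$, in each coordinate one has $\tilde\mu_i-\hat\mu_i=\lambda_1 t_i$, where $t_i=\mathrm{sign}(\tilde\mu_i)$ when $|\tilde\mu_i|>\lambda_1$ and $t_i=\tilde\mu_i/\lambda_1\in[-1,1]$ when $|\tilde\mu_i|\le\lambda_1$; in both cases $t_i\in\partial|\hat\mu_i|$. Consequently $Y-\hat\mu=(Y-\tilde\mu)+(\tilde\mu-\hat\mu)=\lambda_2 D^Tz+\lambda_1 t$. If the same $z$ is still a valid subgradient of $\|\cdot\|_{TV}$ at $\hat\mu$, then this display is exactly the stationarity condition $Y-\hat\mu\in\lambda_1\partial\|\hat\mu\|_1+\lambda_2\partial\|\hat\mu\|_{TV}$, and uniqueness finishes the argument.

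The crux, and the step I expect to be the main obstacle, is checking that $z$ remains admissible for $\|\hat\mu\|_{TV}$, i.e. that $z_j\in\partial|(D\hat\mu)_j|$ for every $j$. This reduces to the fact that $SH_{\lambda_1}$ is a nondecreasing function, so it preserves the sign of every consecutive difference: if $\tilde\mu_{j+1}>\tilde\mu_j$ then $\hat\mu_{j+1}\ge\hat\mu_j$, if $\tilde\mu_{j+1}<\tilde\mu_j$ then $\hat\mu_{j+1}\le\hat\mu_j$, and if $\tilde\mu_{j+1}=\tilde\mu_j$ then $\hat\mu_{j+1}=\hat\mu_j$. In the strict cases the difference either keeps a sign that agrees with $z_j=\pm1$ or collapses to $0$, and in the equality case $z_j\in[-1,1]$ is compatible with a zero difference. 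The only care needed is the bookkeeping for boundary situations where thresholding merges two previously distinct levels; monotonicity guarantees these can only send a difference to zero, never flip its sign, so admissibility of $z$ is never violated. This confirms that $\hat\mu$ meets the KKT condition, and by uniqueness $\hat\mu(\lambda_1,\lambda_2)=SH_{\lambda_1}(\hat\mu(0,\lambda_2))$ as claimed.
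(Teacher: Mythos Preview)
Your argument is correct. The paper does not give its own proof of this lemma but simply refers the reader to \cite{friedman2007pathwise}; the proof there proceeds by exactly the subgradient/KKT verification you outline, using that soft-thresholding is monotone so the TV subgradient $z$ remains admissible after thresholding. So your route coincides with the intended one.
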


The proof of Lemma \ref{lemma:lam1=0} can be found in \cite{friedman2007pathwise}.  From Lemma \ref{lemma:lam1=0}, to study the property of $\hat\mu(\lambda_1,\lambda_2)$, we can set $\lambda_1 = 0$ first. In the whole paper, {since pattern recovery is our main concern,} so we only consider the case when $\lambda_1 = 0$. When $\lambda_1 = 0$, we can  solve the {FLSA} by {change of variables}. Let $\theta_1 = \mu_1, \theta_i = \mu_i - \mu_{i-1}, i =2,\ldots,n.$ In matrix form, we have
$\mu = A\theta$, with
\begin{equation}
\label{eqn:A}
A_{n\times n} =    \left(\begin{matrix} % or pmatrix or bmatrix or Bmatrix or ...
      1 & 0 & \ldots &\ldots &\ldots &0 \\
      1 & 1 & 0  &\ldots & \ldots &0 \\
      1 & 1 & 1 &0 &\ldots &0\\
      \ldots & \ldots&\ldots &\ldots &\ldots &\ldots &\\
      1& 1& 1& \ldots&\ldots &1
   \end{matrix}
   \right)
\end{equation}

So by using $\theta$ instead of $\mu$, we have an equivalent solution of $\hat \mu(0,\lambda_2)$ via the following $\hat \theta(\lambda_2)$.
\begin{equation}
\label{eqn:FLSA2Lasso}
\hat\theta(\lambda_2) = \mathop{\text{argmin}} \limits_{\theta} \frac{1}{2}\|Y-A\theta\|_2^2 + \lambda_2 \|\tilde\theta\|_1,
\end{equation}
where ${\tilde \theta= (\theta_2,\theta_3,\ldots,\theta_n)^T\in \R^{n-1}}$. Once we obtain $\hat\theta(\lambda_2)$, we have $\hat\mu(0,\lambda_2) = A\hat\theta(\lambda_2)$. Notice the special form of the design matrix $A$, Expression \eqref{eqn:FLSA2Lasso} is a  Lasso problem with interception. In fact, Expression \eqref{eqn:FLSA2Lasso}  can be rewritten as

\begin{equation} \label{trans flsa}
  {\hat{\theta}}(\lambda_2) = \mathop{\text{argmin}} \limits_{(\theta_1, {\tilde{\theta}})} \frac{1}{2}{\|Y - \theta_1 - {X}{\tilde{\theta}}\|_2}^2 + \lambda_2 \|{\tilde{\theta}}\|_1.
\end{equation}
where ${\tilde{\theta}} = (\theta_2, \ldots, \theta_n)^T$ and ${X} = (x_{ij}) \in \mathbb{R}^{n \times (n-1)}$:
\begin{equation}
\label{eqn:X}
  x_{ij} = \begin{cases}
    1 & i > j \\
    0 & i \leq j.
  \end{cases}
\end{equation}

Define the centered version of $X\in \R^{n\times (n-1)}$ and $Y\in \R^{n}$ as follows.
\begin{equation}
\label{eqn:centered}
\tilde X = [X_1-\bar{X_1}, \ldots, X_{n-1} - \bar X_{n-1}] \mbox{  and  }\tilde Y = Y - \bar Y
\end{equation}
with $\bar u$ being the average of the vector $u$. It is easy to see that Expression \eqref{trans flsa} is equivalent to the following standard Lasso problem without interception.

\begin{equation} \label{trans flsa1}
  {\hat{\tilde \theta}}(\lambda_2) = \mathop{\text{argmin}} \limits_{ {\tilde{\theta}}}  \frac{1}{2}{\|\tilde Y  - \tilde{X}{\tilde{\theta}}\|_2}^2 + \lambda_2 \|{\tilde{\theta}}\|_1, \ \ \mbox{and} \ \ \hat\theta_1(\lambda_2) = \bar Y - \bar X \hat{\tilde \theta}(\lambda_2).
\end{equation}

Since the observation $Y = (y_1,\ldots,y_n)$ follows {the} model defined {in} Equation \eqref{eqn:model}. Define $\theta^* = A^{-1} \mu^*$, (equivalently, $\theta_1^* = \mu_1^*, \theta_i ^*= \mu_i ^*- \mu_{i-1}^*, i =2,\ldots,n$), where $A$ is defined in Equation \eqref{eqn:A}.  Let $\tilde \theta^* \in \R^{n-1}= (\theta_2^*,\theta_3^*,\ldots,\theta_n^*)^T$. We have that $(X,Y)$
satisfy the following linear model:
\[Y = A\theta^* + \epsilon = \theta_1^* + X\tilde \theta^* + \epsilon,\]
where $X$ is defined at Equation \eqref{eqn:X}.
 Consequently  the centered version of $(X,Y)$ satisfy the following linear model:
\begin{equation}
\label{eqn:LM}
\tilde Y =  \tilde X \tilde \theta^* + \tilde \epsilon,
\end{equation}
where $\tilde \epsilon = \epsilon - \bar{\epsilon}$ with  $E(\tilde \epsilon) = 0$.
Now we see that $\hat{\tilde \theta}(\lambda_2) $ defined at \eqref{trans flsa1} has the sign recovery property if and only if $\hat{\tilde \theta}(\lambda_2) =_s \tilde \theta^*.$ By the relationship between $\mu$ and $\theta$,  $\hat{\tilde \theta}(\lambda_2) =_s \tilde \theta^*$ is equivalent to $\hat\mu(0,\lambda_2) =_{js} \mu^*$. {In} other words, the pattern recovery property of an {FLSA}  can be viewed as sign recovery of a Lasso estimator.
\begin{property}
The pattern recovery of the {FLSA} $\hat\mu (0,\lambda_2)$ defined in Equation \eqref{eqn:fusedLasso} is equivalent to the sign consistency of the the Lasso estimator $\hat{\tilde \theta}(\lambda_2)$ defined in Equation \eqref{trans flsa1}.
\end{property}

{
Note that this change of variables serves mainly for {theoretical} analysis rather than computational facilitation.
Although there are many mature algorithms for the Lasso, {transforming} the {FLSA} to the Lasso is not recommended in {practice} because it makes the design matrix in \eqref{trans flsa1} much more dense{, which is unfavorable to the efficiency of computation}. Instead, \cite{friedman2007pathwise} develops specialized algorithm for the FLSA based on the coordinate-wise descent. \cite{hoefling2010path} generalizes the path algorithm and extends it to the general fused Lasso problem.
However, in our consistency analysis, this transformation works since we can use the well understood techniques on the Lasso to analyze the theoretical properties of the FLSA.
%Although this is not only simply the change of variables because the correlated noise appears as stated above, we will see later that this does not affect our consistency results.
}

We now turn to analyze the Lasso problem defined in Equation \eqref{trans flsa1}.

\section{{ {The} Transformed  Lasso}}
It is now well understood that in a standard linear regression problem the Lasso is sign consistent when the design matrix satisfies some stringent conditions. One such condition is {the} irrepresentable condition defined as follows.

\begin{definition}[Irrepresentable {Condition}]
The design matrix $X$ satisfies the \textbf{Irrepresentable {Condition}} for $\truebeta$ with support $S = \{j: \truebetaj \neq 0\}$ if, for some  $\eta \in (0,1]$,
\begin{equation}
\left\|\XbT\Xa\left(\XaT\Xa\right)^{-1}sign(\betA)\right\|_{\infty}\leq
1-\eta, \label{IC}
\end{equation}
where for a vector $x$, $\|x\|_{\infty}  = \max_i |x_i|$, and for $T \subset \{1, \dots, p\}$ with $|T| = t$,  $\X_T \in \R^{n \times t}$ is a matrix which containes the columns of $\X$ indexed by $T$.
\end{definition}

Let {$\Lambda_{\min}\left(X\right)$ be the minimal eigenvalue of the matrix $X$ and }
\begin{equation*}
C_{\min} = \Lambda_{\min}\left(\Xa^T\Xa\right) > 0.
%\label{min.eigen}
\end{equation*}

Define
$$\Psi(\X, \truebeta, \lam) =\lam\left[ \frac{\eta}{\sqrt{C_{\min}}\max_{j\in S^c} \|X_j\|_2}+ \left\|\left(\Xa^T\Xa \right)^{-1}
sign(\beta^*_S)\right\|_\infty \right].$$
With the above notation, we have a general non-asymptotic result for the sign recovery of the Lasso when data $(X,Y)$ follow a linear model.
\begin{theorem}
\label{thm:consis}
Suppose that data $(\X,\Y)$ follow a linear model $Y = \X\truebeta + \epsilon$, where $Y = (y_1,\ldots,y_n)^T \in \R^{n\times1}$, $X \in \R^{n\times p}$ with ${x_i^T}$ as its $i$th row, $\beta^*\in \R^{p\times 1}$ and $\epsilon = (\epsilon_1,\ldots,\epsilon_n)^T \in \R^{n\times 1}$ with $\epsilon\sim N(0,\Sigma_\e)$.  Assume that the irrepresentable condition (\ref{IC})
holds.
If $\lam$ satisfies
$$\minbeta>  \Psi(\X, \truebeta, \lam),$$
then with probability greater than
$$1-2p
\exp\left\{-\frac{\lam^2\eta^2}{2[\Lambda_{\max}
(\Sigma_\e)\max_{j\in S^c} \|X_j\|_2^2]}\right\},$$
the Lasso has a unique solution $\hbet$ with $\hbet =_s \truebeta$.
\end{theorem}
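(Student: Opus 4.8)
The plan is to prove this by the \emph{primal–dual witness} construction based on the Karush–Kuhn–Tucker (KKT) conditions for the Lasso. Since \eqref{eqn:Lasso} is convex, a vector is a minimizer if and only if there exists a subgradient $z\in\partial\|\cdot\|_1$ with $\X^T(\Y-\X\bet)=\lam z$, where $z_j=sign(\beta_j)$ whenever $\beta_j\neq 0$ and $z_j\in[-1,1]$ otherwise. I would construct an explicit candidate supported on $S$ with the correct signs, and then verify that it satisfies the full KKT system with the subgradient on $S^c$ \emph{strictly} below one in absolute value; together with $C_{\min}>0$ this certifies both optimality and uniqueness. The two properties to be established on a high-probability event are thus \textbf{(i)} $sign(\hat\beta_S)=sign(\beta^*_S)$ and \textbf{(ii)} strict dual feasibility $\|z_{S^c}\|_\infty<1$.

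First I would form the candidate by setting $\hat\beta_{S^c}=0$ and solving the restricted stationarity equation, giving $\hat\beta_S=(\XaT\Xa)^{-1}(\XaT\Y-\lam\,sign(\beta^*_S))$, which is well defined because $C_{\min}>0$. Substituting $\Y=\Xa\beta^*_S+\e$ yields the decomposition $\hat\beta_S-\beta^*_S=(\XaT\Xa)^{-1}\XaT\e-\lam(\XaT\Xa)^{-1}sign(\beta^*_S)$ into a \emph{stochastic} and a \emph{deterministic} term. Property (i) holds as soon as $\|\hat\beta_S-\beta^*_S\|_\infty<\minbeta$, so no nonzero coordinate can change sign; by the triangle inequality and the hypothesis $\minbeta>\Psi(\X,\truebeta,\lam)$, it suffices to control the stochastic term by $\minbeta-\lam\|(\XaT\Xa)^{-1}sign(\beta^*_S)\|_\infty$, which is at least $\lam\eta/(\sqrt{C_{\min}}\max_{j\in S^c}\|X_j\|_2)$.

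Next I would handle (ii). Writing $P_S=\Xa(\XaT\Xa)^{-1}\XaT$ for the projection onto the column space of $\Xa$, a direct computation gives the residual $\Y-\X\hat\beta=(I-P_S)\e+\lam\Xa(\XaT\Xa)^{-1}sign(\beta^*_S)$, so that $z_{S^c}=\tfrac1\lam\XbT(I-P_S)\e+\XbT\Xa(\XaT\Xa)^{-1}sign(\beta^*_S)$. The irrepresentable condition \eqref{IC} bounds the second (deterministic) term by $1-\eta$ in $\ell_\infty$, hence strict dual feasibility follows once the stochastic term $\tfrac1\lam\XbT(I-P_S)\e$ has $\ell_\infty$ norm below $\eta$. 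Both remaining events are linear functionals of the Gaussian vector $\e$: for fixed $a$, $a^T\e\sim N(0,a^T\Sigma_\e a)$ with $a^T\Sigma_\e a\le\Lambda_{\max}(\Sigma_\e)\|a\|_2^2$ — this is the only place the noise correlation enters, replacing the $\sigma^2$ of the i.i.d.\ case. For (i) each coordinate has coefficient $b_i=\Xa(\XaT\Xa)^{-1}e_i$ with $\|b_i\|_2^2=e_i^T(\XaT\Xa)^{-1}e_i\le 1/C_{\min}$; for (ii) each coordinate has coefficient $\tfrac1\lam(I-P_S)X_j$ of norm at most $\|X_j\|_2/\lam$, since a projection is nonexpansive. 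The decisive calibration is that $C_{\min}$ cancels: the threshold $\lam\eta/(\sqrt{C_{\min}}\max\|X_j\|_2)$ paired with coefficient norm $1/\sqrt{C_{\min}}$ produces exactly the same Gaussian exponent $\lam^2\eta^2/(2\Lambda_{\max}(\Sigma_\e)\max_{j\in S^c}\|X_j\|_2^2)$ as the $S^c$ event. A union bound over at most $p$ coordinates in each group, each contributing a factor $2$, then yields the stated failure probability.

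The step I expect to be the main obstacle is the \emph{uniqueness} claim, as opposed to mere existence of a sign-correct minimizer. On the event that strict dual feasibility holds, I would argue that every minimizer must vanish off $S$: the fitted value $\X\bet$, and hence the subgradient $\tfrac1\lam\X^T(\Y-\X\bet)$, is common to all minimizers (a standard consequence of convexity of the quadratic loss together with the $\ell_1$ penalty), so any coordinate $j\in S^c$ carried by some minimizer would force $|z_j|=1$, contradicting $|z_j|<1$. Once all minimizers are supported on $S$, the restricted objective is strictly convex because $\XaT\Xa$ is invertible ($C_{\min}>0$), forcing the unique minimizer to be the sign-correct candidate constructed above. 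The remainder is the bookkeeping needed to match the two tail thresholds, which becomes routine once the $C_{\min}$ cancellation is noticed.
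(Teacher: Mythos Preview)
Your proposal is correct and follows the same route the paper relies on. The paper itself omits the proof, pointing to Lemma~3 of \cite{jia2012preconditioning} and noting that the only change is dropping the normalization $\|X_j\|_2\le 1$; the KKT characterization it records in the appendix (Lemma~\ref{KKT} and the $V_j$ remark) is exactly your primal--dual witness construction, and your handling of the correlated noise via $a^T\Sigma_\e a\le\Lambda_{\max}(\Sigma_\e)\|a\|_2^2$ together with the $C_{\min}$ cancellation reproduces the stated exponent. One cosmetic point: the union bound is over $|S|+|S^c|=p$ coordinates in total (not $p$ in each group), which is what gives the factor $2p$.
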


The proof of Theorem \ref{thm:consis} is very similar to that {of} Lemma 3 in \cite{jia2012preconditioning} (pp. 24). The only difference is that in  \cite{jia2012preconditioning}, they scale each column of $X$ to be bounded with $\|X_j\|_2 \leq 1$. Here we do not have any assumption for the $\ell_2$ norm of $X_j$. If we further have the assumption that  $\|X_j\|_2 \leq 1$ for each $j$, then we have exactly the same result as in  \cite{jia2012preconditioning}. So we omit the proof for Theorem \ref{thm:consis} .

{The} irrepresentable condition is a key condition for the Lasso's sign consistency.  A lot of researchers noticed that the irrepresentable condition is a necessary condition for the Lasso's sign consistency \citep{zhao2006model,wainwright2009,jia2010Lasso}. We also state this conclusion  under a more general linear model with correlated noise terms.

\begin{theorem}
\label{THM:NECESSARY CONDITION}
Suppose that data $(\X,\Y)$ follow a linear model $Y = \X\truebeta + \epsilon$, with Gaussian noise $\epsilon\sim N(0,\Sigma_\e)$.  The irrepresentable condition (\ref{IC}) is necessary for the {sign consistency of the Lasso}. In other words, if
\begin{equation}
\left\|\XbT\Xa\left(\XaT\Xa\right)^{-1}sign(\betA)\right\|_{\infty}\geq 1,
\end{equation}
we have
\[P(\hbet =_s \truebeta) \leq \frac{1}{2}.\]
\end{theorem}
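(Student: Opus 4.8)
The plan is to read the conclusion off the Karush--Kuhn--Tucker (subgradient) optimality conditions of the Lasso, restricted to the event $E = \{\hbet =_s \truebeta\}$ on which sign recovery succeeds. On $E$ the off-support coefficients are pinned to zero (because $sign(\hat\beta_j) = sign(\truebetaj) = 0$ forces $\hat\beta_j = 0$ for $j \in S^c$) and on the support $\hat\beta_S$ is nonzero with $sign(\hat\beta_S) = sign(\betA)$. First I would write stationarity as $\X^T(Y - \X\hbet) = \lam z$, where $z$ is a subgradient of $\|\cdot\|_1$ at $\hbet$: on $E$ this splits into the exact equality $\XaT(Y-\X\hbet)=\lam\, sign(\betA)$ on the support and the slackness inequality $\|\XbT(Y-\X\hbet)\|_\infty \le \lam$ off it. Substituting $Y = \Xa\betA + \e$ and $\X\hbet = \Xa\hat\beta_S$ turns these into one block-$S$ equation and one block-$S^c$ inequality.

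Next I would solve the block-$S$ equation for the estimation error. Since $C_{\min}>0$, the matrix $\XaT\Xa$ is invertible, and the $S$-equation rearranges to $\betA - \hat\beta_S = (\XaT\Xa)^{-1}(\lam\, sign(\betA) - \XaT\e)$. Plugging this into the residual $\XbT(Y - \Xa\hat\beta_S)$ and collecting terms would give the clean identity
\begin{equation*}
\XbT(Y - \Xa\hat\beta_S) = \lam\,\XbT\Xa(\XaT\Xa)^{-1}sign(\betA) + W, \qquad W := \XbT\left(I - \Xa(\XaT\Xa)^{-1}\XaT\right)\e.
\end{equation*}
The first summand is precisely the vector appearing in the irrepresentable condition \eqref{IC}, scaled by $\lam$, and $W$ is a linear image of the Gaussian noise $\e$, hence itself a centered Gaussian vector since $E(\e)=0$.

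Finally I would exploit the assumed violation. By hypothesis there is a coordinate $j^\ast \in S^c$ with $|a_{j^\ast}| \ge 1$, where $a := \XbT\Xa(\XaT\Xa)^{-1}sign(\betA)$; assume without loss of generality $a_{j^\ast}\ge 1$ (the case $a_{j^\ast}\le -1$ is symmetric). On $E$ the block-$S^c$ inequality forces $|a_{j^\ast} + W_{j^\ast}/\lam|\le 1$, so in particular $W_{j^\ast}\le \lam(1-a_{j^\ast})\le 0$; thus $E\subseteq\{W_{j^\ast}\le 0\}$, and the bound is independent of $\lam$. Since $W_{j^\ast}$ is a centered scalar Gaussian, $P(W_{j^\ast}\le 0)=\tfrac12$ whenever its variance is positive (which holds as long as $X_{j^\ast}\notin \operatorname{col}(\Xa)$ and $\Sigma_\e\succ 0$), giving $P(\hbet =_s \truebeta)\le \tfrac12$. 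The step that needs the most care is the first one: justifying that sign recovery forces $\hat\beta_{S^c}=0$ and fixes the on-support subgradient to $sign(\betA)$, so that the KKT system genuinely decouples into the equality/inequality blocks above; once that is in place the probabilistic conclusion is immediate from the symmetry of the centered Gaussian $W_{j^\ast}$, with the only degenerate case ($W_{j^\ast}\equiv 0$, i.e.\ $X_{j^\ast}\in\operatorname{col}(\Xa)$) handled separately since then a strict violation $a_{j^\ast}>1$ already yields $P(E)=0$.
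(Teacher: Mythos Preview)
Your proposal is correct and follows essentially the same route as the paper's own proof: both use the KKT/subgradient optimality conditions (the paper cites these as a separate lemma from \cite{wainwright2009}), solve the $S$-block equation to express the off-support residual as $\lam a + W$ with $a$ the irrepresentable vector and $W$ a centered Gaussian, and then read off $P(E)\le\tfrac12$ from the symmetry of $W_{j^\ast}$ at the violating coordinate. Your treatment is in fact slightly more careful than the paper's in flagging the degenerate case $\operatorname{Var}(W_{j^\ast})=0$, which the paper glosses over; note, however, that your separate handling still leaves unresolved the knife-edge case $a_{j^\ast}=1$ exactly together with $X_{j^\ast}\in\operatorname{col}(\Xa)$ (or $\Sigma_\e$ singular), where neither argument yields the bound.
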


A proof of Theorem \ref{THM:NECESSARY CONDITION} can be seen in the appendix. Theorem \ref{THM:NECESSARY CONDITION} says that if the irrepresentable condition does not hold, {it} is very likely that the Lasso does not recover signs of the coefficients.

{With the above theorem, we now come back to the transformed Lasso problem defined in Equation \eqref{trans flsa1}} and examine if the irrepresentable condition holds or not {in this case}.
Recall that for the Lasso problem transformed from the {FLSA}, we have the design matrix
$$\tilde X = [X_1-\bar{X_1}, \ldots, X_{n-1} - \bar X_{n-1}].$$

%So $$ \Lambda_{\max}
%(\Sigma_\e) = (1-\frac{1}{n})\sigma^2 \leq \sigma^2.$$
%When $j = n/2$, $\|\tilde X_j\|^2$ achieves the maximum value $n/4$.

Denote $S = \{j: \tilde \theta^*_j \neq 0\}$ as the index set of the relevant variables in the true model. Let $j$ be the index of any of the irrelevant variables. Then (\ref{IC}) can be written as
\begin{equation*}
\left|{\tilde{X_j}}^T {\tilde{X_S}} ({\tilde{X_S}}^T {\tilde{X_S}})^{-1}sign(\tilde\theta^*)\right| < 1, \forall j\not\in S
\end{equation*}
which is equivalent to
\[|\hat b_j^T sign(\tilde\theta^*)| < 1, \forall j\not\in S\]
with $\hat b_j\in \R^{|S|}$ the OLS estimate of $b_j$  in  the following linear regression equation
\begin{equation}
  \tilde{X_j} =  {b_j}^T \tilde{X}_S + \epsilon.
\end{equation}
 Since $\tilde X$ is the centered version of $X$, it can be easily shown that  $\hat b_j$ is also the OLS estimate of $b_j$ in the following linear regression equation:
\begin{equation} \label{reg eqn}
  X_j = b_0 +  {b_j}^T  {X_S} + \epsilon,
\end{equation}
where $b_0\in \R$ is the intercept term.

A stronger version of irrepresentable condition is as follows
\begin{equation}
\label{SIrC}
\left\|{\tilde{X_j}}^T {\tilde{X_S}} ({\tilde{X_S}}^T {\tilde{X_S}})^{-1}\right\|_1 < 1, \forall j\not\in S.
\end{equation}
If \eqref{SIrC} holds, then for any $\mu^*$ (equivalently, for any $\tilde \theta^*$) {the} irrepresentable condition always holds. Otherwiese, if \eqref{SIrC} does not hold, then there exists some $\tilde \theta^*$ such that {the} irrepresentable condition {fails to} hold. We have a necessary and sufficient condition on $\mu^*$ such that the stronger version of {the} irrepresentale condition \eqref{SIrC} holds.
\begin{theorem}
\label{thm:SIC}
  Assume $y=(y_1,\ldots,y_n)$ satisfies model \eqref{eqn:model}, {the collection of the indexes of jump points is} $S = \{j_1,j_2,\ldots,j_s\}$ with $j_k (1 \leq k \leq s)$  increasing.  Formally,
  $S = \{j: \mu_j^* \neq \mu_{j-1}^*, j =2,\ldots,n\}$.  Then the stronger version of {the} irrepresentable condition \eqref{SIrC} holds if and only if the jump points are consecutive. That is,
  $s = 1$ or
  \begin{equation*}
    \max_{1 \leq k < s} (j_{k+1} - j_k) = 1.
  \end{equation*}
\end{theorem}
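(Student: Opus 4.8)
The plan is to exploit the explicit step-function structure of the columns of $X$ and reduce the whole computation to an elementary block-averaging argument, thereby avoiding any explicit inversion of $X_S^T X_S$. By the reduction carried out just before the statement, condition \eqref{SIrC} is equivalent to $\|\hat b_j\|_1 < 1$ for every $j \notin S$, where $\hat b_j \in \R^{|S|}$ is the vector of slope coefficients obtained by regressing the column $X_j$ on $X_S$ with an intercept, as in \eqref{reg eqn}. So it suffices to compute $\|\hat b_j\|_1$ for each $j \notin S$ and determine exactly when it is strictly less than $1$. (A harmless bookkeeping point: the column-index set in \eqref{reg eqn} is a uniform translate of the $S = \{j : \mu_j^* \neq \mu_{j-1}^*\}$ of the statement, and consecutiveness is invariant under translation, so I will work with the column indexing.)

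First I would record the geometry. By \eqref{eqn:X}, the column $X_k$ is the indicator of $\{i : i > k\}$, a monotone $0/1$ step function with a single jump at $k$. Writing $S = \{j_1 < \cdots < j_s\}$ and setting $j_0 = 0$, $j_{s+1} = n$, these jump points partition $\{1,\ldots,n\}$ into $s+1$ consecutive blocks $B_m = \{i : j_m < i \le j_{m+1}\}$ for $m = 0,\ldots,s$. The key observation is that $\mathrm{span}\{\mathbf{1}, X_{j_1}, \ldots, X_{j_s}\}$ is precisely the space of functions on $\{1,\ldots,n\}$ that are constant on each block (the block indicators are $\mathbf{1}-X_{j_1}$, the differences $X_{j_m}-X_{j_{m+1}}$, and $X_{j_s}$, and conversely). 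These $s+1$ vectors are linearly independent, so the regression is well posed, and the OLS fit of $X_j$ is simply its block-wise average: if $v_m$ denotes the mean of $X_j$ over $B_m$, the fitted function equals $\sum_m v_m \mathbf{1}_{B_m}$. Expanding this in the basis $\{\mathbf{1}, X_{j_1}, \ldots, X_{j_s}\}$ shows that the $\ell$-th slope coefficient is the jump $v_\ell - v_{\ell-1}$, whence $\|\hat b_j\|_1 = \sum_{\ell=1}^{s} |v_\ell - v_{\ell-1}|$.

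The decisive simplification comes from monotonicity: since $X_j$ is nondecreasing in $i$ and the blocks are consecutive intervals, the averages satisfy $0 \le v_0 \le v_1 \le \cdots \le v_s \le 1$, so the sum telescopes and $\|\hat b_j\|_1 = v_s - v_0 \in [0,1]$. It then remains only to read off $v_0$ and $v_s$. As $v_0$ is the mean of $X_j$ over $B_0 = \{1,\ldots,j_1\}$, one gets $v_0 = 0$ if $j \ge j_1$ and $v_0 > 0$ if $j < j_1$; likewise $v_s = 1$ if $j \le j_s$ and $v_s < 1$ if $j > j_s$. Hence $\|\hat b_j\|_1 = v_s - v_0 = 1$ exactly when $j_1 \le j \le j_s$, and is strictly smaller otherwise. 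Therefore \eqref{SIrC} fails for some $j \notin S$ if and only if some integer in $[j_1, j_s]$ is not a jump point, i.e. $S \neq \{j_1, j_1+1, \ldots, j_s\}$; equivalently it holds for all $j \notin S$ if and only if $s = 1$ or $\max_{1 \le k < s}(j_{k+1}-j_k) = 1$, which is the claim.

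The step I expect to require the most care is justifying that $\mathrm{span}\{\mathbf{1}, X_{j_1}, \ldots, X_{j_s}\}$ equals the block-constant space and that the orthogonal projection onto it is block-averaging; once this is in place, the monotone-telescoping identity $\|\hat b_j\|_1 = v_s - v_0$ and the two boundary computations finish the argument with no matrix inversion at all.
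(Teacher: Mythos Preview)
Your proof is correct and takes a genuinely different route from the paper's. The paper proceeds by explicit linear algebra: it forms $Z_S=[\mathbf{1}_n,\,X_S]$, invokes an appendix lemma to show that $(Z_S^T Z_S)^{-1}$ is tridiagonal with explicitly known entries, and then multiplies by $Z_S^T X_j$ in three cases (namely $j_k<j<j_{k+1}$, $j<j_1$, $j>j_s$) to read off the components of $\hat b_j$ and hence $\|\hat b_j\|_1$. You bypass the matrix inversion entirely by observing that $\mathrm{span}\{\mathbf{1},X_{j_1},\dots,X_{j_s}\}$ is exactly the space of block-constant vectors, so the OLS fit is block averaging; the slope coefficients are then successive differences $v_\ell-v_{\ell-1}$ of block means, and monotonicity of $X_j$ makes $\|\hat b_j\|_1$ telescope to $v_s-v_0$. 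Your argument is shorter and more conceptual, yet it recovers the same explicit entries the paper computes (for $j_k<j<j_{k+1}$ only the $k$-th and $(k{+}1)$-th slopes are nonzero, equal to $(j_{k+1}-j)/(j_{k+1}-j_k)$ and $(j-j_k)/(j_{k+1}-j_k)$), which is precisely what the paper reuses in the proof of the next theorem. The paper's payoff for its heavier computation is that the tridiagonal-inverse lemma is recycled later in bounding the smallest singular value of $\tilde X$; your argument is self-contained for this result.
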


\begin{proof}
Note that the OLS estimate of the coefficients in the linear regression equation (\ref{reg eqn}) is
\begin{equation}
  \left(\begin{array}{c}
    \hat b_0 \\
     {\hat b_j}
  \end{array}\right)
  = ( {Z_S}^T  {Z_S})^{-1}  {Z_S}^T  {X}_j,
\end{equation}
where $
   {Z_S} =
  \left(\begin{array}{cc}
     {1_n} & {X_S}
  \end{array}\right)$.
We know that $Z_S^T Z_S = (t_{k\ell}) \in \mathbb{R}^{(s+1)\times(s+1)}$ with
\begin{equation*}
  {t_{k\ell} = n - \max\{j_{k-1},j_{\ell-1}\}.}
\end{equation*}
%
%\begin{equation*}
%   {Z_S}^T  {Z_S} =
%    \left(\begin{array}{cccc}
%        n & n-{j_1} & \vdots & n-{j_s} \\
%        n-{j_1} & n-{j_1} & \vdots & \vdots \\
%        \cdots & \cdots & \cdot & \vdots \\
%        n-{j_s} & \cdots & \cdots & n-{j_s}
%    \end{array}\right)
%\end{equation*}
where we assume $j_0 = 0$. According to a linear algebra result stated in Lemma \ref{lemmaapp} in  the appendix, the inverse of this matrix {is a tridiagonal matrix:
\begin{equation*}
    (Z_S^T Z_S)^{-1} = \left[\begin{array}{cccccc}
       r_{11} & r_{12} &  &  &  & \\
         r_{21} & r_{22} &  r_{23} & & &\\
         & r_{32} & r_{33}  &  r_{34}  & &\\
         & & \ddots & \ddots & \ddots & \\
         & & & r_{{s,s-1}} & r_{{s,s}} & r_{{s,s+1}} \\
         & & & & r_{{s+1,s}} & r_{{s+1,s+1}}
         \end{array}\right]
\end{equation*}
%\begin{equation*}
%    \left[\begin{array}{cccccc}
%        \frac{1}{j_1} & -\frac{1}{j_1} &  &  &  & \\
%         -\frac{1}{j_1} & \frac{j_2}{j_1(j_2-j_1)} &  -\frac{1}{j_2-j_1} & & &\\
%         & -\frac{1}{j_2 - j_1} & \frac{j_3-j_1}{(j_2-j_1)(j_3-j_2)}  &  -\frac{1}{j_3-j_2}  & &\\
%         & & \ddots & \ddots & \ddots & \\
%         & & & -\frac{1}{j_{s-1}-j_{s-2}} & \frac{j_{s}-j_{s-2}}{(j_{s-1}-j_{s-2})(j_s-j_{s-1})} & -\frac{1}{j_s-j_{s-1}} \\
%         & & & & -\frac{1}{j_s-j_{s-1}} & \frac{n-j_{s-1}}{(j_s-j_{s-1})(n-j_s)}
%         \end{array}\right].
%\end{equation*}
%\begin{equation*}
%      r_{k\ell} = \begin{cases}
%      \displaystyle{\frac{1}{a_1 - a_2}} & k = \ell = 1 \\
%      \displaystyle{-\frac{1}{a_k - a_{k+1}}} & k = \ell - 1 \\
%      \displaystyle{-\frac{1}{a_\ell - a_{\ell+1}}} & k = \ell + 1 \\
%      \displaystyle{\frac{a_{k-1} - a_{k+1}}{(a_{k-1} - a_{k})(a_{k} - a_{k+1})}} & 1 < k = \ell < s+1 \\
%      \displaystyle{\frac{a_{s}}{(a_{s} - a_{s+1})a_{s+1}}} & k = \ell = s+1 \\
%      0 & \text{otherwise}.
%      \end{cases}
%\end{equation*}
where
\begin{equation*}
      r_{k\ell} = \begin{cases}
      \frac{1}{j_1} & k = \ell = 1 \\
      -\frac{1}{j_{\ell-1} - j_{{\ell-2}}} & k = \ell - 1 \\
      -\frac{1}{j_\ell - j_{\ell-1}} & k = \ell + 1 \\
      \frac{j_{{\ell}} - j_{{\ell}-2}}{(j_{{\ell}-1} - j_{{\ell}-2})(j_{{\ell}} - j_{{\ell}-1})} & 1 < k = \ell < s+1 \\
      \frac{n-j_{s-1}}{(j_{s} - j_{s-1})(n-j_{s})} & k = \ell = s+1 \\
      0 & \text{otherwise}.
      \end{cases}
\end{equation*}
%$$( {Z_S}^T  {Z_S})^{-1}$$
%\begin{equation*}
%   =
%    \left(\begin{array}{cccccc}
%        \displaystyle{\frac{1}{j_1}} & -\displaystyle{\frac{1}{j_1}} &  &  &  & \\
%         -\displaystyle{\frac{1}{j_1}} & \displaystyle{\frac{j_2}{j_1(j_2-j_1)}} &  -\frac{1}{j_2-j_1} & & &\\
%         & -\frac{1}{j_2 - j_1} & \frac{j_3-j_1}{(j_2-j_1)(j_3-j_2)}  &  -\frac{1}{j_3-j_2}  & &\\
%         & & \ddots & \ddots & \ddots & \\
%         & & & -\frac{1}{j_{s-1}-j_{s-2}} & \frac{j_{s}-j_{s-2}}{(j_{s-1}-j_{s-2})(j_s-j_{s-1})} & -\frac{1}{j_s-j_{s-1}} \\
%         & & & & -\frac{1}{j_s-j_{s-1}} & \displaystyle{\frac{n-j_{s-1}}{(j_s-j_{s-1})(n-j_s)}}
%         \end{array}\right)
%\end{equation*}

Denote $v = \left(\begin{array}{c}
    \hat b_0 \\
     {\hat b_j}
  \end{array}\right)
  =( {Z_S}^T  {Z_S})^{-1}  {Z_S}^T  {X}_j$. There are three pattern types that we need to consider.
\begin{enumerate}
\renewcommand{\labelenumi}{(\roman{enumi})}
\item
    If there exists $1 \leq k < s$ such that $j_{k+1} - j_k \geq 2$, then for any $j$  with $j_k<j<j_{k+1}$,
    \begin{equation*}
        {Z_S}^T {X}_j = (\underbrace{n-j,n-j,\ldots,n-j}_{{k+1}}, n-j_{k+1}, n-j_{k+2}, \ldots, n-j_s)^{T}.
    \end{equation*}

     We have
    \begin{equation*}
     v = \big(0, \ldots, 0, \frac{j_{k+1}-j}{j_{k+1}-j_k}, -\frac{j_{k+1}-j}{j_{k+1}-j_k} + 1, 0, \ldots,0 \big)^T.
    \end{equation*}
    %\begin{equation}
%    %  \left(\begin{array}{c}
%    %    b_0 \\
%    %     {b}
%    %  \end{array}\right)
%        v_i = \begin{cases}
%          0 & i \leq k \\
%          \frac{j_{k+1}-j}{j_{k+1}-j_k} & i = k+1 \\
%          -\frac{j_{k+1}-j}{j_{k+1}-j_k} + 1 & i = k + 2\\
%          0 & i > k + 2
%        \end{cases}.
%    \end{equation}
    Hence,
    \begin{equation}
    \label{eqn:samesign}
    ||\hat b_j||_1 = \displaystyle{\bigg |\frac{j_{k+1}-j}{j_{k+1}-j_k}\bigg | + \bigg |-\frac{j_{k+1}-j}{j_{k+1}-j_k} + 1\bigg | = 1}, \mbox{since $j_k < j < j_{k+1}$.}
    \end{equation}

\item
    If $j < j_1$,
    \begin{equation*}
        {Z_S}^T {X}_j = (n-j,n-j_1,n-j_2, \ldots, n-j_s)^{T}.
    \end{equation*}
%      Denote $v = \left(\begin{array}{c}
%    \hat b_0 \\
%     {\hat b_j}
%  \end{array}\right)
%  =( {Z_S}^T  {Z_S})^{-1}  {Z_S}^T  {X}_j$.
  We have
    \begin{equation*}
%      v_i = \begin{cases}
%        1 - \frac{j}{j_1} & i = 1\\
%        \frac{j}{j_1} & i =2 \\
%        0 & i > 2
%      \end{cases}.
        v = \big(1 - \frac{j}{j_1}, \frac{j}{j_1}, 0, \ldots, 0\big)^T.
    \end{equation*}
    Hence, $||\hat b_j||_1 = |\frac{j}{j_1}| < 1$.
\item
    If $j > j_s$,
    \begin{equation*}
        {Z_S}^T {X}_j = (\underbrace{n-j,\ldots, n-j}_{s+1})^{T}.
    \end{equation*}
%    Denote $v = \left(\begin{array}{c}
%    \hat b_0 \\
%     {\hat b_j}
%  \end{array}\right)
%  =( {Z_S}^T  {Z_S})^{-1}  {Z_S}^T  {X}_j$.
    We have
    \begin{equation*}
%      v_i = \begin{cases}
%        0 & i \leq s \\
%        \frac{n-j}{n-j_s} & i = s + 1
%      \end{cases}
%    \end{equation*}
    v = \big(0, \ldots, 0, \frac{n-j}{n-j_s}\big)^T.
    \end{equation*}
    Hence, $||\hat b_j||_1 = |\frac{n-j}{n-j_s} | < 1$.
    \end{enumerate}
}

These three cases for the position  of $j\in S^c$ show that as long as $j$ is not {between} two jump points, $\|\hat b_j\|_1 < 1$. Otherwise $\|\hat b_j\|_1 = 1$.  So   \begin{equation*}
   s = 1 \  \  \ \mbox{or} \ \ \ \max_{1 \leq k < s} (j_{k+1} - j_k) = 1
  \end{equation*} is necessary and sufficient for all $\|\hat b_j\|_1<1${, $j\in S^c$}.
\end{proof}

The above theorem shows that only a few special structures on $\mu^*$ make the stronger version of {the} irrepresentable condition {hold}. From the proof, we can propose a necessary and sufficient condition for  {the} irrepresentable condition.

\begin{theorem}
\label{thm:ic}
  Assume $y=(y_1,\ldots,y_n)$ satisfies model \eqref{eqn:model}, {the collection of the indexes of jump points} are $S = \{j_1,j_2,\ldots,j_s\}$ with $j_k (1 \leq k \leq s)$  increasing.  Formally,
  $S = \{j: \mu_j^* \neq \mu_{j-1}^*, j =2,\ldots,n\}$.  Then the  irrepresentable condition \eqref{IC} holds if and only if one of the following two conditions holds.
  \begin{enumerate}[(1)]
  \item  The jump points are consecutive. That is, $s=1$ or
  \begin{equation*}
    \max_{1 \leq k < s} (j_{k+1} - j_k) = 1.
  \end{equation*}

  \item  If there exists one group of data points (with more than 1 point) between some two jump points and  these data point have the same expected signal strength, then the two jumps are {of different directions} (up or down). Formally, let $j_k$ and $j_{k+1}$ be two jump points and ${\mu_{j_k}^*} = \ldots = \mu_{j_{k+1} -1}^*$, then ${(\mu_{j_k}^* - \mu_{j_k - 1}^*)(\mu_{j_{k + 1}}^* - \mu_{j_{k+1} - 1}^*) < 0}$.

    \end{enumerate}
\end{theorem}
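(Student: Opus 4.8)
The plan is to reduce the irrepresentable condition to a per-column signed inequality and then read off the answer from the explicit vector $v$ already computed in the proof of Theorem~\ref{thm:SIC}. As noted just before \eqref{SIrC}, condition \eqref{IC} for the transformed design is equivalent to $|\hat b_j^T sign(\tilde\theta^*)| < 1$ for every $j \in S^c$, where $\hat b_j$ is the non-intercept part of $v = (Z_S^T Z_S)^{-1} Z_S^T X_j$. So I would keep the same trichotomy of positions for $j \in S^c$ used in Theorem~\ref{thm:SIC}: $j < j_1$, $j > j_s$, and $j_k < j < j_{k+1}$ for some $k$. The whole theorem then comes from inspecting $\hat b_j^T sign(\tilde\theta^*)$ in each case, the difference from Theorem~\ref{thm:SIC} being that I now keep track of the signs $sign(\tilde\theta^*)$ rather than only the $\ell_1$ norm $\|\hat b_j\|_1$.

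For the two boundary cases I would dispose of things immediately. When $j < j_1$ the computation in Theorem~\ref{thm:SIC} gives $\hat b_j$ with a single nonzero entry $\tfrac{j}{j_1}$, and when $j > j_s$ a single nonzero entry $\tfrac{n-j}{n-j_s}$; in both situations $\|\hat b_j\|_1 < 1$, so $|\hat b_j^T sign(\tilde\theta^*)| \le \|\hat b_j\|_1 < 1$ holds automatically, regardless of the jump directions. Hence positions to the left of the first jump or to the right of the last jump can never obstruct \eqref{IC}, and the entire content of the theorem is concentrated in the interior case.

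The decisive case is $j_k < j < j_{k+1}$ with a genuine gap $j_{k+1} - j_k \ge 2$. There $\hat b_j$ has exactly two nonzero coordinates, namely $\tfrac{j_{k+1}-j}{j_{k+1}-j_k}$ and $\tfrac{j-j_k}{j_{k+1}-j_k}$, sitting at the positions indexing the jumps $j_k$ and $j_{k+1}$; both are strictly positive and they sum to $1$ (this is exactly why $\|\hat b_j\|_1 = 1$ in \eqref{eqn:samesign}). Writing $s_k = sign(\mu^*_{j_k} - \mu^*_{j_k-1})$ and $s_{k+1} = sign(\mu^*_{j_{k+1}} - \mu^*_{j_{k+1}-1})$, I get
\[
\hat b_j^T sign(\tilde\theta^*) = \frac{j_{k+1}-j}{j_{k+1}-j_k}\, s_k + \frac{j-j_k}{j_{k+1}-j_k}\, s_{k+1},
\]
a \emph{strict} convex combination of $s_k, s_{k+1} \in \{-1,+1\}$ (strict because $j_k < j < j_{k+1}$ forces both weights into $(0,1)$). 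The dichotomy is then clean: if $s_k = s_{k+1}$ the combination equals $\pm 1$ and \eqref{IC} fails, while if $s_k = -s_{k+1}$ the combination lies strictly in $(-1,1)$ for every $j$ in the gap, so \eqref{IC} holds at all those columns.

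Assembling these, \eqref{IC} holds if and only if every gap (every pair of consecutive jump points with $j_{k+1}-j_k \ge 2$) carries opposite-direction jumps. When no gap exists, i.e.\ $s=1$ or $\max_k(j_{k+1}-j_k)=1$, this requirement is vacuous, giving alternative~(1); when gaps exist, it is precisely alternative~(2), since the ``group of points with common signal strength between $j_k$ and $j_{k+1}$'' is exactly the constant segment $[j_k, j_{k+1}-1]$, which contains more than one point iff $j_{k+1}-j_k \ge 2$, and ``opposite directions'' is exactly $(\mu^*_{j_k}-\mu^*_{j_k-1})(\mu^*_{j_{k+1}}-\mu^*_{j_{k+1}-1})<0$. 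I do not expect a serious obstacle here: the heavy algebra, namely inverting $Z_S^T Z_S$ and producing the explicit $v$, is inherited verbatim from Theorem~\ref{thm:SIC}. The one point requiring care is the interior case, where I must argue that the two weights are \emph{strictly} positive (so that same-sign jumps give exactly $\pm 1$ and opposite-sign jumps give a value strictly inside $(-1,1)$ uniformly over all $j$ in the gap); this strictness, which relies on the open inequalities $j_k < j < j_{k+1}$, is what upgrades the $\|\hat b_j\|_1 = 1$ verdict of the strong condition into the sharp sign-dependent characterization.
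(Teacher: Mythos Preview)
Your proposal is correct and follows essentially the same approach as the paper: both arguments reduce to the explicit form of $\hat b_j$ computed in the proof of Theorem~\ref{thm:SIC}, dispose of the boundary cases $j<j_1$ and $j>j_s$ via $\|\hat b_j\|_1<1$, and then settle the interior case $j_k<j<j_{k+1}$ by observing that the two nonzero entries of $\hat b_j$ are a strict convex combination summing to $1$, so $|\hat b_j^T sign(\tilde\theta^*)|<1$ iff the adjacent jump signs differ. Your write-up is simply more explicit about the convex-combination step that the paper leaves implicit in its appeal to \eqref{eqn:samesign}.
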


\begin{proof}
From Theorem \ref{thm:SIC}, if condition (1) in Theorem \ref{thm:ic} holds, a stronger version of {the} irrepresentable condition holds {and thus the} irrepresentable condition  \eqref{IC} holds. If condition (1) does not hold, then there exists two jump points $j_k$ and $j_{k+1}$ such that $j_{k+1} \geq j_k + {2}$ and $\mu_{j_k}^* = \ldots = \mu_{j_{k+1} -1}^*$. From Equation \eqref{eqn:samesign}  in the proof of {Theorem \ref{thm:SIC}}, we see that {the} irrepresentable condition \eqref{IC} holds if and only if $\tilde \theta_k^* $ and $\tilde \theta_{k+1}^*$ {have} different signs. By the definition of $\tilde \theta$, we see that
${(\mu_{j_k}^* - \mu_{j_k - 1}^*)(\mu_{j_{k + 1}}^* - \mu_{j_{k+1} - 1}^*) < 0}$ is equivalent to $\tilde \theta_k^* $ and $\tilde \theta_{k+1}^*$ {having} different signs.
\end{proof}

Theorem \ref{thm:ic} says that only a few configurations of $\mu^*$ make {the} irrepresentable condition hold. In practice, a lot of signal patterns do not satisfy {either of} the two conditions {listed} in Theorem \ref{thm:ic}. For the Lasso problem, to comply with the irrepresentable condition, \cite{jia2012preconditioning} proposed a Puffer {T}ransformation. We now introduce {the} Puffer Transformation and apply it to solve the {fused Lasso} problem, {which we call the preconditioned fused Lasso.}

\section{{ Preconditioned Fused Lasso}}
\label{sec:PT}

\cite{jia2012preconditioning} introduces {the} Puffer {T}ransformation to the Lasso when the design matrix does not satisfy the {irrepresentable condition}. They showed that when $n\geq p$, even {if}  the Lasso is not sign consistent, after {the} Puffer Transformation, the Lasso is sign consistent under some mild conditions.

We assume that the design matrix $\X \in \R^{ n \times p}$ has rank $d = \min\{n,p\}$.  {By the} singular value decomposition, there exist matrices $U \in \R^{n \times d}$ and $V\in \R^{p \times d}$  with $U^TU=V^TV = I_d$ and a diagonal matrix $D \in \R^{d \times d}  $ such that $\X = UDV'$.  Define the \textbf{Puffer Transformation} \citep{jia2012preconditioning},
\begin{equation} \label{eqn:PT}
 F_{n\times n}  = UD^{-1}U^T.
\end{equation}
The preconditioned design matrix $ F \X$ has the same singular vectors as $\X$.  However, all of the nonzero singular values of $ F \X$ are set to unity: $ F \X = UV'$.  When $n\ge p$, the columns of $ F \X$ are orthonormal.  When $n \le p$, the rows of $ F \X$ are orthonormal.  \cite{jia2012preconditioning}
has a non-asymptotic result for the Lasso on $(F\X,FY)$ stated as follows.

\begin{theorem}[\cite{jia2012preconditioning}]
\label{thm:fixeddim}
Suppose that data $(\X,\Y)$ follow a linear model $Y = \X\truebeta + \epsilon$, where $Y = (y_1,\ldots,y_n)^T \in \R^{n\times1}$, $X \in \R^{n\times p}$ with ${x_i^T}$ as its $i$th row, $\beta^*\in \R^{p\times 1}$ and $\epsilon = (\epsilon_1,\ldots,\epsilon_n)^T \in \R^{n\times 1}$ with $\epsilon\sim N(0,\sigma^2I_n)$. Define the singular value decomposition of $\X$ as $\X = UDV'$.  Suppose that $n\geq p$ and $\X$ has rank $p$. We further assume that {the minimal eigenvalue} $\Lambda_{\min}(\frac{1}{n}X'X) \geq \tilde C_{\min}>0$. Define the {Puffer Transformation},
$ F  = UD^{-1}U^T.$ Let $Z= F \X$ and $a = F Y$. Define

\[\tilde \beta(\lambda) = \mathop{\text{argmin}} \limits_{b} \frac{1}{2} \|a - Z b\|_2^2 + \lambda  \| b\|_1.\]

If $\min_{j\in S}|{\beta_j^{*}}| \geq 2  \lambda$,
then with probability greater than
\begin{equation}
\label{eqn:lowerb}
1-2p \exp\left\{-\frac{n\lam^2\tilde C_{\min}}{2\sigma^2}\right\}
\end{equation}
 $\tilde \beta(\lambda) =_s \truebeta$.
\end{theorem}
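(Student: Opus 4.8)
The plan is to reduce the statement to the general sign-recovery result of Theorem~\ref{thm:consis}, applied to the transformed data $(Z, a) = (F\X, F\Y)$, exploiting the fact that the Puffer Transformation renders the design orthonormal so that the irrepresentable condition \eqref{IC} becomes trivial. First I would compute $Z = F\X$ explicitly: substituting the SVD $\X = UDV'$ and $F = UD^{-1}U^T$ and using $U^TU = I_d$ gives $Z = UD^{-1}U^TUDV' = UV'$, exactly as recorded just before the theorem. Since $n \geq p$ and $\X$ has rank $p$, we have $d = p$ and $V$ is a $p \times p$ orthogonal matrix, so $Z^TZ = VU^TUV' = VV' = I_p$. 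Thus the columns of $Z$ are orthonormal; in particular $\|Z_j\|_2 = 1$ for every $j$, the Gram matrix $Z_S^TZ_S = I_{|S|}$ (so $C_{\min} = 1$), and, crucially, $Z_{S^c}^TZ_S = 0$.

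Next I would rewrite the model in the transformed coordinates. Since $\Y = \X\truebeta + \e$, we obtain $a = F\Y = F\X\truebeta + F\e = Z\truebeta + \tilde\e$, a linear model with Gaussian noise $\tilde\e = F\e \sim N(0, \sigma^2 FF^T)$; this is precisely the correlated-noise setting covered by Theorem~\ref{thm:consis}. With the computations above, the irrepresentable condition for $Z$ holds with $\eta = 1$, because $Z_{S^c}^TZ_S(Z_S^TZ_S)^{-1}sign(\beta_S^*) = 0$. I would then evaluate $\Psi(Z, \truebeta, \lambda)$: using $\eta = 1$, $C_{\min} = 1$, $\max_{j\in S^c}\|Z_j\|_2 = 1$, and $(Z_S^TZ_S)^{-1}sign(\beta_S^*) = sign(\beta_S^*)$ (whose $\ell_\infty$ norm is $1$), I get $\Psi = \lambda(1 + 1) = 2\lambda$. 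Hence the separation condition $\minbeta > \Psi$ required by Theorem~\ref{thm:consis} reduces to $\min_{j\in S}|\beta_j^*| > 2\lambda$, essentially the stated hypothesis, and its conclusion yields a unique solution $\tilde\beta(\lambda)$ with $\tilde\beta(\lambda) =_s \truebeta$.

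The step I expect to require the most care is translating the probability bound of Theorem~\ref{thm:consis}, which involves $\Lambda_{\max}(\Sigma_\e)$ with $\Sigma_\e = \sigma^2 FF^T$, into the clean form \eqref{eqn:lowerb}. Here I would compute $FF^T = UD^{-1}U^TUD^{-1}U^T = UD^{-2}U^T$, whose nonzero eigenvalues are $1/d_i^2$, where the $d_i$ are the singular values of $\X$. Therefore $\Lambda_{\max}(FF^T) = 1/d_{\min}^2$ with $d_{\min}^2 = \Lambda_{\min}(\X^T\X)$. The assumption $\Lambda_{\min}(\frac{1}{n}\X^T\X) \geq \tilde C_{\min}$ gives $d_{\min}^2 \geq n\tilde C_{\min}$, so $\Lambda_{\max}(\Sigma_\e) \leq \sigma^2/(n\tilde C_{\min})$. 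Substituting $\eta = 1$, $\max_{j\in S^c}\|Z_j\|_2^2 = 1$, and this bound into the exponent $-\lambda^2\eta^2/\big(2\,\Lambda_{\max}(\Sigma_\e)\max_{j\in S^c}\|Z_j\|_2^2\big)$ of Theorem~\ref{thm:consis} converts it into $-n\lambda^2\tilde C_{\min}/(2\sigma^2)$, which is exactly \eqref{eqn:lowerb}. This would complete the argument; the only genuinely new idea beyond bookkeeping is the observation that orthonormality of $Z$ forces the irrepresentable condition at its strongest level $\eta = 1$, which is what makes the Puffer-transformed Lasso succeed regardless of the original design.
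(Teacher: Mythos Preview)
Your argument is correct and follows exactly the route the paper intends: it does not give its own proof of this theorem but simply refers to \cite{jia2012preconditioning}, noting immediately afterward that one should ``compare Theorem~\ref{thm:fixeddim} to Theorem~\ref{thm:consis}''---precisely the reduction you carry out. Your computations ($Z^TZ=I_p$, hence $\eta=1$, $C_{\min}=1$, $\Psi=2\lambda$, and $\Lambda_{\max}(\sigma^2FF^T)\le\sigma^2/(n\tilde C_{\min})$) are all correct, so the derivation is a faithful, self-contained version of the cited proof.
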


The proof of Theorem  \ref{thm:fixeddim} can be found in \cite{jia2012preconditioning}. From the proof we see that the assumption that $\epsilon \sim N(0,\sigma^2I_n)$ can be {relaxed} to $\epsilon \sim N(0,\Sigma)$ with $\max_i{\Sigma_{ii}} \leq \sigma^2$.  Compare Theorem \ref{thm:fixeddim} to Theorem \ref{thm:consis}, we see that with {the Puffer Transformation}, the Lasso does not need the irrepresentable condtion any more.

The {FLSA} problem can be transformed to a standard Lasso problem. We have already shown that for most configurations of $\mu^*$, the design matrix  $\tilde X$ does not satisfy the irrepresentable condition. Now we turn to {the} Puffer {T}ransformation and obtain a concrete non-asymptotic result for {the} preconditioned fused Lasso. First  we have the following result on the singular values of $\tilde X$.

\begin{lemma}
\label{LEMMA:MIN.SING}
 $\tilde X \in \R^{n\times (n-1)}$ is defined in Equation \eqref{eqn:centered}. Let $\sigma_j(\cdot)$ denote the $j$-th largest singular value of a matrix.Then
 \[\sigma_1(\tilde X) \geq \sigma_2(\tilde X) \geq \cdots\geq  \sigma_{n-1}(\tilde X) \geq 0.5. \]

\end{lemma}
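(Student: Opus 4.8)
The plan is to compute the smallest singular value of $\tilde X$ through its Rayleigh quotient and to recognize the resulting generalized eigenvalue problem as one governed by the Laplacian of a path graph. First I would write $\sigma_{n-1}(\tilde X)^2 = \min_{v \neq 0}\|\tilde X v\|^2/\|v\|^2$, where $v$ ranges over $\R^{n-1}$. This requires checking that $\tilde X$ has full column rank $n-1$, so that $\sigma_{n-1}$ really is the smallest (nonzero) singular value: since $X$ is $A$ with its first column deleted and $A$ is invertible, $X$ has rank $n-1$, and writing $\tilde X = PX$ with $P = I - \tfrac1n\mathbf{1}\mathbf{1}^T$ (the centering matrix, $\mathbf 1$ the all-ones vector), a kernel vector $v$ would force $Xv \in \mathrm{span}(\mathbf 1)$; but $Xv$ always has first coordinate $0$, so $Xv=0$ and $v=0$. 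The key observation is that for any $v$ the vector $Xv$ is the cumulative-sum vector $s=(s_1,\dots,s_n)^T$ with $s_1=0$ and $s_i=\sum_{j<i}v_j$, so $v_j = s_{j+1}-s_j$.

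Second, this change of variables turns both norms into simple quadratic forms in $s$: $\|\tilde X v\|^2 = \|P(Xv)\|^2 = \sum_{i=1}^n (s_i-\bar s)^2 = s^T P s$, and $\|v\|^2 = \sum_{j=1}^{n-1}(s_{j+1}-s_j)^2 = s^T L s$, where $L$ is the path-graph Laplacian (tridiagonal, $-1$ off the diagonal, $2$ on the interior diagonal, and $1$ at the two ends). Both quadratic forms vanish on $\mathbf 1$ and hence are invariant under $s \mapsto s + c\mathbf 1$, so the constraint $s_1=0$ can be dropped and I may minimize the ratio over all $s \notin \mathrm{span}(\mathbf 1)$.

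Third, since the ratio depends only on the component of $s$ orthogonal to $\mathbf 1$, I restrict to $s \in \mathbf 1^\perp$, where $P$ acts as the identity; the ratio becomes $\|s\|^2/(s^T L s)$, whose minimum over $\mathbf 1^\perp\setminus\{0\}$ is $1/\lambda_{\max}(L)$ (as $L$ is positive definite on $\mathbf 1^\perp$, its kernel being exactly $\mathrm{span}(\mathbf 1)$). Hence $\sigma_{n-1}(\tilde X)^2 = 1/\lambda_{\max}(L)$. The final step is to bound $\lambda_{\max}(L)\le 4$, which is immediate from Gershgorin's theorem: each diagonal entry of $L$ is at most $2$ and the absolute off-diagonal row sums are at most $2$, so every eigenvalue lies in $[0,4]$. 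Therefore $\sigma_{n-1}(\tilde X)^2 \ge 1/4$, i.e. $\sigma_{n-1}(\tilde X)\ge 0.5$.

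The main obstacle is conceptual rather than computational: recognizing that centering composed with the cumulative-sum design matrix converts the singular-value problem into the path-Laplacian eigenvalue problem. Attacking the explicit Gram matrix $\tilde X^T\tilde X$ instead, whose $(j,k)$ entry is $n-\max(j,k)-(n-j)(n-k)/n$, would force a direct lower bound on its smallest eigenvalue and is considerably messier. I would also remark that the bound is essentially sharp: $\lambda_{\max}(L)=4\sin^2\!\big(\tfrac{(n-1)\pi}{2n}\big)\uparrow 4$, so $\sigma_{n-1}(\tilde X)\downarrow 1/2$, the extremal direction being the zig-zag vector $v$.
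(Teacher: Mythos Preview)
Your argument is correct and takes a genuinely different route from the paper. The paper first bounds the singular values of the \emph{uncentered} $n\times n$ lower-triangular matrix $A$ (Equation~\eqref{eqn:A}): it computes $(A^TA)^{-1}$ explicitly via Lemma~\ref{lemmaapp}, obtains a tridiagonal matrix, and shows $u^T(A^TA)^{-1}u\le 4\|u\|^2$ by an AM--GM estimate, so that $\sigma_{\min}(A)\ge 1/2$. It then transfers this to $\tilde X$ through an eigenvalue-interlacing inequality for centered versus non-centered second-moment matrices due to Cadima et~al.\ (Lemma~\ref{c-u}), together with the observation that centering $A$ kills its first column and leaves $[\,\mathbf 0\;\tilde X\,]$. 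Your approach attacks $\tilde X$ directly: the cumulative-sum change of variables converts the Rayleigh quotient into $s^TPs/s^TLs$ with $L$ the path Laplacian, the shift-invariance lets you restrict to $\mathbf 1^\perp$, and Gershgorin gives $\lambda_{\max}(L)\le 4$. Conceptually both proofs land on ``bound the top eigenvalue of a tridiagonal Laplacian-type matrix by $4$,'' but yours is self-contained (no external interlacing lemma), yields the exact identity $\sigma_{n-1}(\tilde X)^2=1/\lambda_{\max}(L)$, and makes the asymptotic sharpness $\sigma_{n-1}(\tilde X)\downarrow 1/2$ transparent. The paper's route, on the other hand, isolates a clean statement about the uncentered triangular matrix (Lemma~\ref{lemma:1/2}) that may be of independent use.
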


A proof of {Lemma} \ref{LEMMA:MIN.SING} can be found in the appendix. With the lower bound on singular values of $\tilde X$ and applying Theorem \ref{thm:fixeddim}, we have the following result {for our preconditioned fused Lasso}.
\begin{theorem}
\label{thm:FLSA}
Assume $y=(y_1,\ldots,y_n)$ satisfies model \eqref{eqn:model}. $\tilde X$ and $\tilde Y$ are defined in Equation \eqref{eqn:centered}. {Let} $\theta^* = A^{-1} \mu^*$, (equivalently, $\theta_1^* = \mu_1^*, \theta_i ^*= \mu_i ^*- \mu_{i-1}^*, i =2,\ldots,n$), where $A$ is defined in Equation \eqref{eqn:A}.  Let $\tilde \theta^* \in \R^{n-1}= (\theta_2^*,\theta_3^*,\ldots,\theta_n^*)^T$. Define the singular value decomposition of $\tilde X$ as $\tilde X = UDV'$.  {Denote} the {Puffer Transformation},
$ F  = UD^{-1}U^T.$ Let $Z= F \tilde X$ and $a = F \tilde Y$. Define

\begin{equation}
\label{eqn:FLasso}
\tilde \beta(\lambda) = {\mathop{\text{argmin}} \limits_{b}} \frac{1}{2} \| a - Z b\|_2^2 + \lambda  \| b\|_1.
\end{equation}

If $\min_{j\geq 2, \theta_j^*\neq 0}|\theta_j^*| \geq 2  \lambda$,
then with probability greater than
$$1-2n\exp\left\{-\frac{\lam^2}{8\sigma^2}\right\} $$
 $\tilde \beta(\lambda) =_s \tilde \theta^*$.
\end{theorem}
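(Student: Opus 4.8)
The plan is to read Theorem~\ref{thm:FLSA} as nothing more than Theorem~\ref{thm:fixeddim} applied verbatim to the centered transformed model~\eqref{eqn:LM}, with the numerical eigenvalue input supplied by Lemma~\ref{LEMMA:MIN.SING}. So first I would set up the dictionary: in the notation of Theorem~\ref{thm:fixeddim} the design is $\tilde X \in \R^{n\times(n-1)}$, the true coefficient vector is $\tilde\theta^*$, the response is $\tilde Y$, and the noise is $\tilde\epsilon = \epsilon - \bar\epsilon$. The dimensions give $p = n-1 \le n$, so the $n \ge p$ hypothesis holds automatically, and the SVD $\tilde X = UDV'$ together with $F = UD^{-1}U^T$, $Z = F\tilde X$, $a = F\tilde Y$ are exactly the objects built in the statement.

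Next I would feed Lemma~\ref{LEMMA:MIN.SING} into the two structural requirements. Since $\sigma_{n-1}(\tilde X) \ge 1/2 > 0$, the matrix $\tilde X$ has full column rank $n-1$, so it has rank $p$ as required; moreover $\Lambda_{\min}(\tilde X'\tilde X) = \sigma_{n-1}(\tilde X)^2 \ge 1/4$, whence
\[
\Lambda_{\min}\Big(\tfrac{1}{n}\tilde X'\tilde X\Big) \ge \frac{1}{4n},
\]
so one may take $\tilde C_{\min} = 1/(4n)$ in the theorem. Matching the signal-strength hypothesis is immediate: writing $\tilde\theta^*_k = \theta^*_{k+1}$ identifies the support $S = \{k : \tilde\theta^*_k \neq 0\}$ with $\{j \ge 2 : \theta^*_j \neq 0\}$, so that $\min_{k\in S}|\tilde\theta^*_k| = \min_{j\ge 2,\,\theta^*_j\neq0}|\theta^*_j|$ and the assumption $\min_{j\ge2,\,\theta^*_j\neq0}|\theta^*_j| \ge 2\lambda$ is precisely the hypothesis $\min_{j\in S}|\beta^*_j|\ge 2\lambda$ of Theorem~\ref{thm:fixeddim}.

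The one genuinely nontrivial point, and the step I would treat most carefully, is the noise. Theorem~\ref{thm:fixeddim} is stated for i.i.d.\ Gaussian errors, but centering correlates them: $\tilde\epsilon = (I - \tfrac1n \mathbf 1\mathbf 1^T)\epsilon$, and since the centering matrix is idempotent the covariance is $\Sigma_{\tilde\epsilon} = \sigma^2(I - \tfrac1n \mathbf 1 \mathbf 1^T)$, with eigenvalues $0$ and $\sigma^2$. Here I would invoke the relaxation noted immediately after Theorem~\ref{thm:fixeddim} — that the conclusion survives for $\tilde\epsilon \sim N(0,\Sigma)$ provided $\max_i \Sigma_{ii} \le \sigma^2$ — and indeed each diagonal entry equals $\sigma^2(1 - 1/n) \le \sigma^2$. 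If one preferred a self-contained check, the mechanism is transparent: with orthonormal $Z = UV'$ the Lasso decouples coordinatewise, sign recovery reduces to the event $\max_j |W_j| < \lambda$ for $W = VD^{-1}U^T\tilde\epsilon$, and $\Sigma_{\tilde\epsilon}\preceq \sigma^2 I$ together with $\Lambda_{\min}(\tilde X'\tilde X)\ge 1/4$ forces $\mathrm{Var}(W_j)\le 4\sigma^2$, after which a Gaussian tail bound plus a union bound over the $n-1$ coordinates closes it.

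Finally I would assemble the probability bound. Substituting $p = n-1$ and $\tilde C_{\min} = 1/(4n)$ into the exponent of Theorem~\ref{thm:fixeddim} gives
\[
1 - 2(n-1)\exp\left\{-\frac{n\lambda^2}{2\sigma^2}\cdot\frac{1}{4n}\right\} = 1 - 2(n-1)\exp\left\{-\frac{\lambda^2}{8\sigma^2}\right\},
\]
and since $2(n-1) \le 2n$ this is no smaller than $1 - 2n\exp\{-\lambda^2/(8\sigma^2)\}$, yielding the stated conclusion $\tilde\beta(\lambda) =_s \tilde\theta^*$. The main obstacle is thus not the algebra but confirming that the correlated-noise relaxation is legitimate in this particular application; everything else is bookkeeping against Theorem~\ref{thm:fixeddim} and Lemma~\ref{LEMMA:MIN.SING}.
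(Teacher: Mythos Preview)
Your proposal is correct and follows essentially the same route as the paper: apply Theorem~\ref{thm:fixeddim} (with the relaxed noise assumption $\max_i\Sigma_{ii}\le\sigma^2$) to the centered model~\eqref{eqn:LM}, use Lemma~\ref{LEMMA:MIN.SING} to get $\tilde C_{\min}=1/(4n)$, and then substitute $p=n-1$ into~\eqref{eqn:lowerb} before weakening $2(n-1)$ to $2n$. The paper's proof is slightly terser and does not spell out the support bookkeeping or the optional self-contained Gaussian-tail argument you include, but the logic is identical.
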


\begin{proof}
By Equation \eqref{eqn:LM}
\begin{equation*}
\tilde Y =  \tilde X \tilde \theta^* + \tilde \epsilon,
\end{equation*}
where $\tilde \epsilon = \epsilon - \bar{\epsilon}$ with  $E(\tilde \epsilon) = 0$.
$$var(\tilde\epsilon_i) = var(\epsilon_i - { \bar{\epsilon}}) = \frac{n-1}{n}\sigma^2 \leq \sigma^2.$$
According to the comments below Theorem \ref{thm:fixeddim}, we can apply  Theorem \ref{thm:fixeddim} to have a lower bound on $P(\tilde \beta(\lambda) =_s \tilde \theta^*)$. Let $s_1\leq s_2\leq\dots \leq s_n$ be the singular values of $\tilde X$.  From Lemma \ref{LEMMA:MIN.SING}, $s_1 \geq 0.5$. So
$\Lambda_{min}(\frac{1}{n} \tilde X' \tilde X) {=} \frac{s_1^2}{n} \geq \frac{1}{4n}$. Put $\tilde C_{\min} = \frac{1}{4n}$ in  expression \eqref{eqn:lowerb} and note that $\tilde X$ has $n-1$ columns, we have

$$P(\tilde \beta(\lambda) =_s \tilde \theta^*) \geq 1-2(n-1) \exp\left\{-\frac{\lam^2}{8\sigma^2}\right\}\geq 1-2n \exp\left\{-\frac{\lam^2}{8\sigma^2}\right\}.$$
\end{proof}

By the relationship between $\theta^*$ and $\mu^*$, if $\tilde \beta(\lambda)$ -- the estimate of $\tilde \theta^*$ has {the} sign recovery property, then the estimate of $\mu^*$ defined as follows has the property of pattern recovery.

 \begin{equation}
 \label{eqn:est}
 \hat\mu^*  = A \hat \theta^*
  \end{equation}
 with \[\hat \theta^* = [\hat\theta_1,\tilde \beta(\lambda)] \mbox{  and  }\hat\theta_1 = \bar Y - \bar X \tilde \beta(\lambda).
 \]
Theorem \ref{thm:FLSA} shows that the ability of pattern recovery depends on the signal jump strength ($\min_{j\geq 2, \theta_j^*\neq 0}|\theta_j^*|$) and the noise level $\sigma^2$. To get a pattern{-}consistent estimate, we need $\sigma$  small enough and $\min_{j\geq 2, \theta_j^*\neq 0}|\theta_j^*|$ big enough. To think about the small $\sigma^2$ issue, we can treat each $y_i$ as an average of multiple Gaussian measurements. If the number {of} measurements is $m$, then $\sigma^2 = \frac{\sigma_0^2}{m}$ with some constant $\sigma_0^2$.  If $m \gg \log(n)$, we can find a very small $\lambda$ to make the estimator defined in Equation \eqref{eqn:est} have {the} pattern recovery property. One choice of $\lambda$ is such that $\lambda^2 = \frac{\log(n+1)}{\sqrt{m}}$. For this choice of $\lambda$,  the probability of $\hat\mu^* =_{js} \mu^*$ is greater than $1-2\exp\left(-[\frac{\sqrt{m}}{{8}\sigma_0^2}-1]\log(n+1)\right)$, which goes to 1 as $m$ goes to $\infty$.

In the next section, we use simulations to illustrate that for general signal patterns, the {FLSA} does not have the pattern recovery property while the {preconditioned fused Lasso} has, which enhances our findings.

\section{Simulations}
We use simulation examples to confirm our theorems. We first set the model to be
\begin{equation}
\label{eqn:data}
y_i = \mu^*_i + \epsilon_i,
\end{equation}
where
$$\mu^*_i =   \left\{ \begin{array}{cc}
      0, & 1\leq i \leq 100 \\
      -2,& 101\leq i \leq 110 \\
      -0.1, & 111\leq i \leq 210 \\
      2,& 211\leq i \leq 220 \\
      0.1, & 221\leq i \leq 320 \\
      -2,& 321\leq i \leq 330 \\
      0, & 331\leq i \leq 430 \\
 \end{array}\right.$$
and the errors are i.i.d.\ Gaussian variables with mean $0$ and standard deviation $\sigma = 0.25$. This one is similar to the example in \cite{rinaldo2009properties} except that the noise here is larger ($\sigma = 0.2$ in  \cite{rinaldo2009properties}). Figure \ref{fig:data} shows one sequence of {sample} data (points) along with the true expected signal (lines).

\begin{figure}[h!]
\begin{center}
\includegraphics[scale = 0.6]{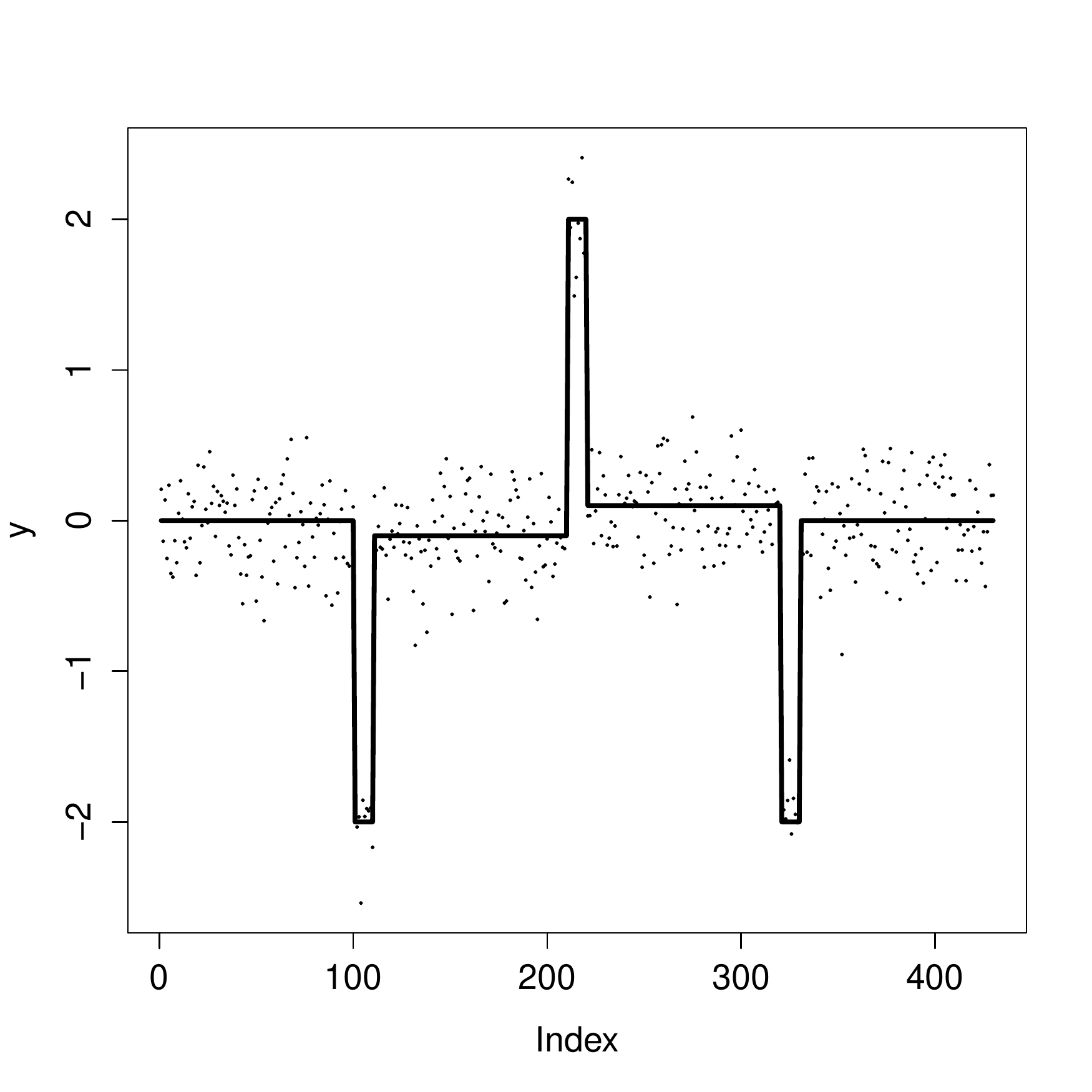}
\end{center}
\caption{\label{fig:data}{Sample} data (points) and the expected signals (lines).}
\end{figure}

From Figure \ref{fig:data} we see that the data points are grouped into seven clusters and featured by three spikes. The points can be well separated due to small noise.
{
We will use this typical example to compare the performances of the two methods, the FLSA and the preconditioned fused Lasso, in recovering the signal patterns. There are many criteria that can be used in comparison. In the context of pattern recovery, it is natural to {define a loss function, which we call the \textit{pattern loss} $(\ell_{PA})$ of the recovered sequence of signals $\hat{\mu}^* = (\hat{\mu}^*_1, \hat{\mu}^*_2, \ldots, \hat{\mu}^*_n) \in \mathbb{R}^n$ as follows:
\begin{equation*}
  {\ell_{PA}}(\hat{\mu}^*) = |\{i: \text{sign}(\hat{\mu}^*_{i+1} - \hat{\mu}^*_i) \neq \text{sign}(\mu^*_{i+1} - \mu^*_{i}), i = 1, \ldots, n-1\}|
\end{equation*}
where ${\mu}^* = ({\mu}^*_1, {\mu}^*_2, \ldots, {\mu}^*_n)$ is the expected signals and $|\cdot|$ the cardinality of a set. Note that the pattern loss achieves 0 if and only if the pattern of the signals is recovered exactly. We compare the solutions under the two methods (FLSA and preconditioned fused lasso). For each method, the solution chosen is the one that minimizes the {pattern loss on the solution path}.}

We first apply {the FLSA} to estimate $\mu_i^*,i=1,2,\ldots,  {430}$.  When calculating the  {FLSA} {solution}, we use a path algorithm proposed by \cite{hoefling2010path} which is  {very efficient} to give the whole solution path {of the FLSA}. An R package (``flsa") for this algorithm { is available} in http://cran.r-project.org/web/packages/flsa/index.html. In fact, the whole  {FLSA} solution path is piecewise linear in $\lambda$. ``flsa" only stores the sequence of $\lambda$'s when the direction {of} the linear function changes. {Note that the {pattern loss} does not change with $\lambda$ on every linear piece of the solution path.} { By} {comparing} the signal pattern {of all the} estimated signals on the solution path with the true {signals} $\mu^*$, we see that there is no one solution {that} recovers the original signal pattern. {That is, all the {FLSA solutions} have a positive {pattern loss}. We present in Figure  \ref{fig:fslasolutions} (left panel) the solution that minimizes {such loss}. We see that this estimate is just the trivial estimate that averages all the signals, which obviously does not give satisfactory recovery of signal patterns.}

\begin{figure}
\centering
\mbox{\subfigure{\includegraphics[scale = 0.4]{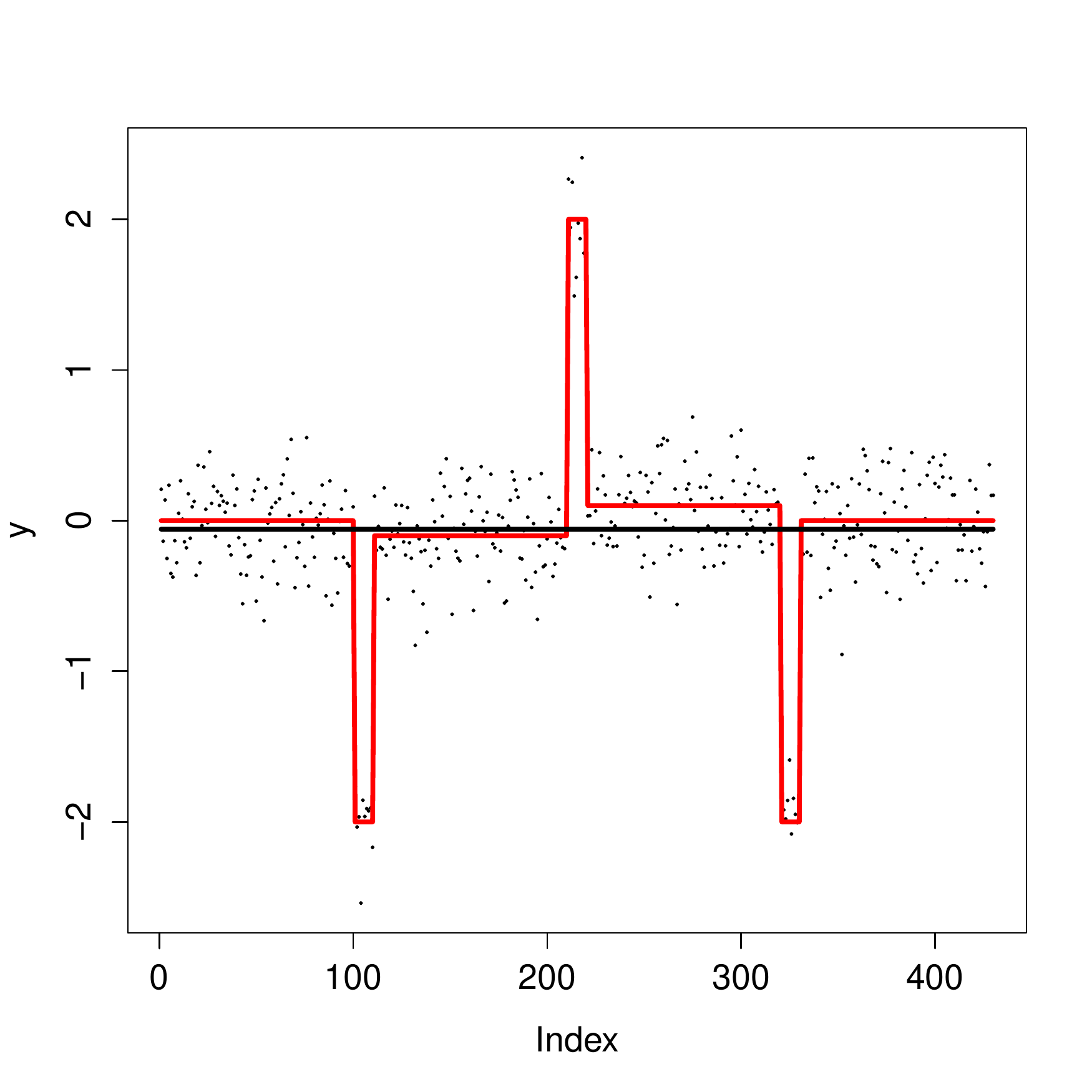}}}\quad
\subfigure{\includegraphics[scale = 0.4]{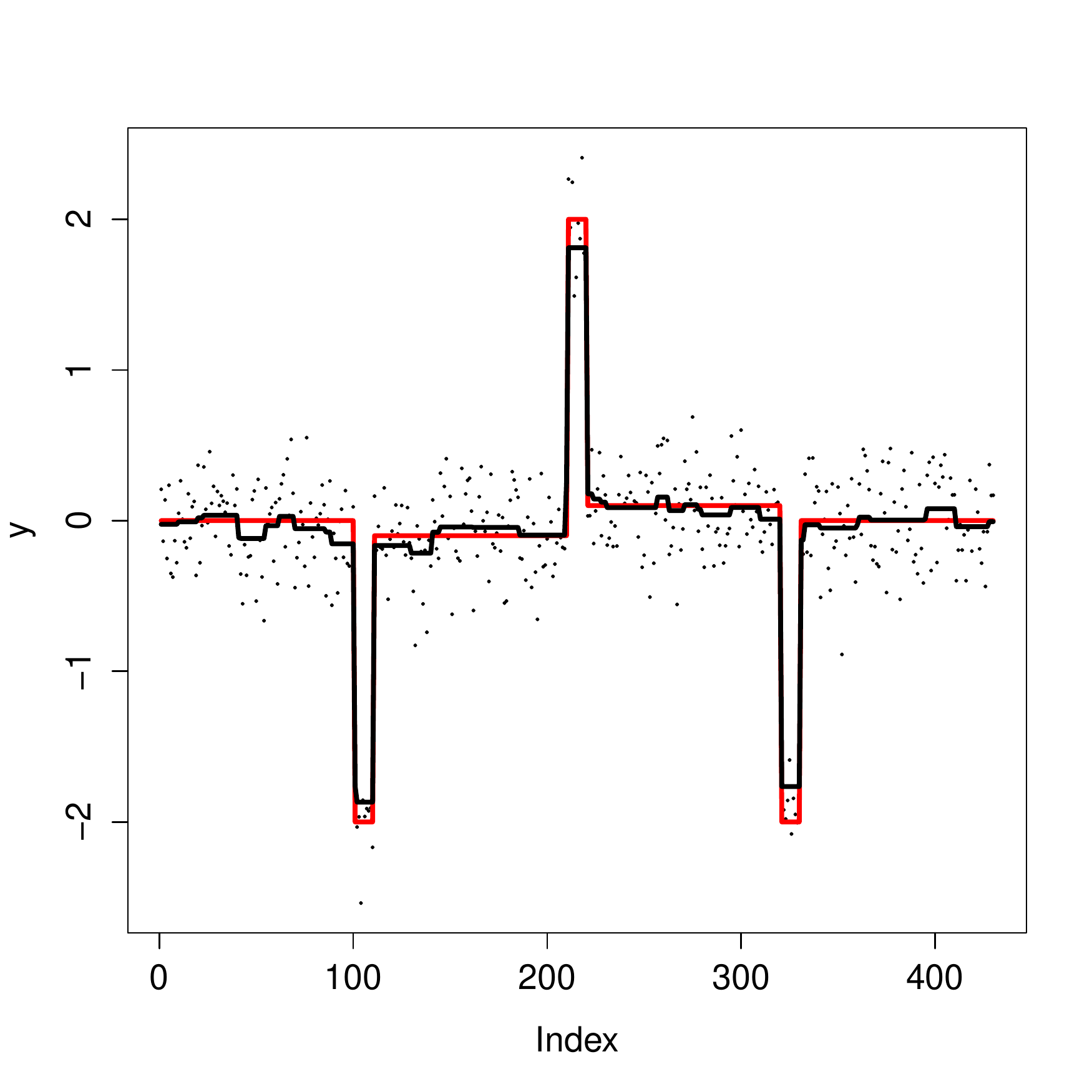}}
\caption{FLSA solutions. Left panel: the {FLSA solution}(in black lines) with tuning parameter selected by {minimizing the pattern loss} of $\hat \mu^*$; Right panel: the {FLSA solution}(in black lines) with tuning parameter selected by minimizing the $\ell_2$ error between $\hat \mu^*$ and $\mu^*$. { {The} red lines are the expected signal sequence. }\label{fig:fslasolutions}}
\end{figure}

%\begin{figure}[h!]
%\begin{center}
%\includegraphics[scale = 0.6]{Rcode/flsa_min{\ell_P}}
%\end{center}
%\caption{\label{fig:flsa min{\PL}}The {FLSA solution}(in lines) with tuning parameter selected by the minimum {pattern loss} between $\hat \mu^*$ and $\mu^*$.}
%\end{figure}

For each $\lambda$ in the sequence, we also calculate the common $\ell_2$ distance between {the} estimated {signals} $\mu^*$ and the true {ones}. The estimate with the smallest $\ell_2$ distance is reported in Figure \ref{fig:fslasolutions} (right panel). We see that for this estimate, it does not recover the original signal pattern either.

%\begin{figure}[h!]
%\label{fig:best flsa}
%\begin{center}
%\includegraphics[scale = 0.6]{}
%\end{center}
%\caption{The {FLSA solution}(in lines) with tuning parameter selected by the minimum $\ell_2$ error between $\hat \mu^*$ and $\mu^*$.}
%\end{figure}

{To compare}, we calculate the solution of the Lasso defined in Equation \eqref{eqn:FLasso}. After the SVD and the Puffer Transformation, this becomes much easier. We only need to do a soft-thresholding with the given $\lambda$. This is because
$$
Z^T Z = X^T F^T F X = (VDU^T)(UD^{-1}U^T)(UD^{-1}U^T)(UDV^T) = I_n
$$
and the property of the Lasso allows us to solve it directly by soft-thresholding
$$\hat{b}(\lambda) = SH_\lambda(Z^T a). $$
%In practice, we do not know the exact value of $\lambda$ in most time and another advantage of this method is that it enables us to sort the value of $Z^T a$ in a decreasing order and pick up from the beginning the desired variables. Certainly, we can do this by a proper choice of $\lambda$. In this case, let $\lambda = 1$ and as we see in Figure \ref{fig:best flsa}, the original signal pattern can be perfectly recovered.
Obviously, $\hat{b}(\lambda)$ is also a piecewise linear function in $\lambda$ and the break points are $\lambda_i= |Z^Ta|_{(i)}, i =1,2,\ldots, n$, where $x_{(i)}$ denotes the $i$th largest value in vector $x$ and $n$ is the dimension of vector $Z^Ta$. On the solution path, for each $\lambda_i$ we have an estimate $\hat{\mu}$ for $\mu^*$. {By examining the {pattern loss} of the solutions on the path, we find that there is one solution { (in fact any solution on {that linear part between the one chosen and the one at the previous breakpoint)} has 0 loss, which means it recovers the signal pattern exactly. We report this solution in Figure \ref{fig:PT-LASSO}.}

\begin{figure}[h!]
\begin{center}
\includegraphics[scale = 0.6]{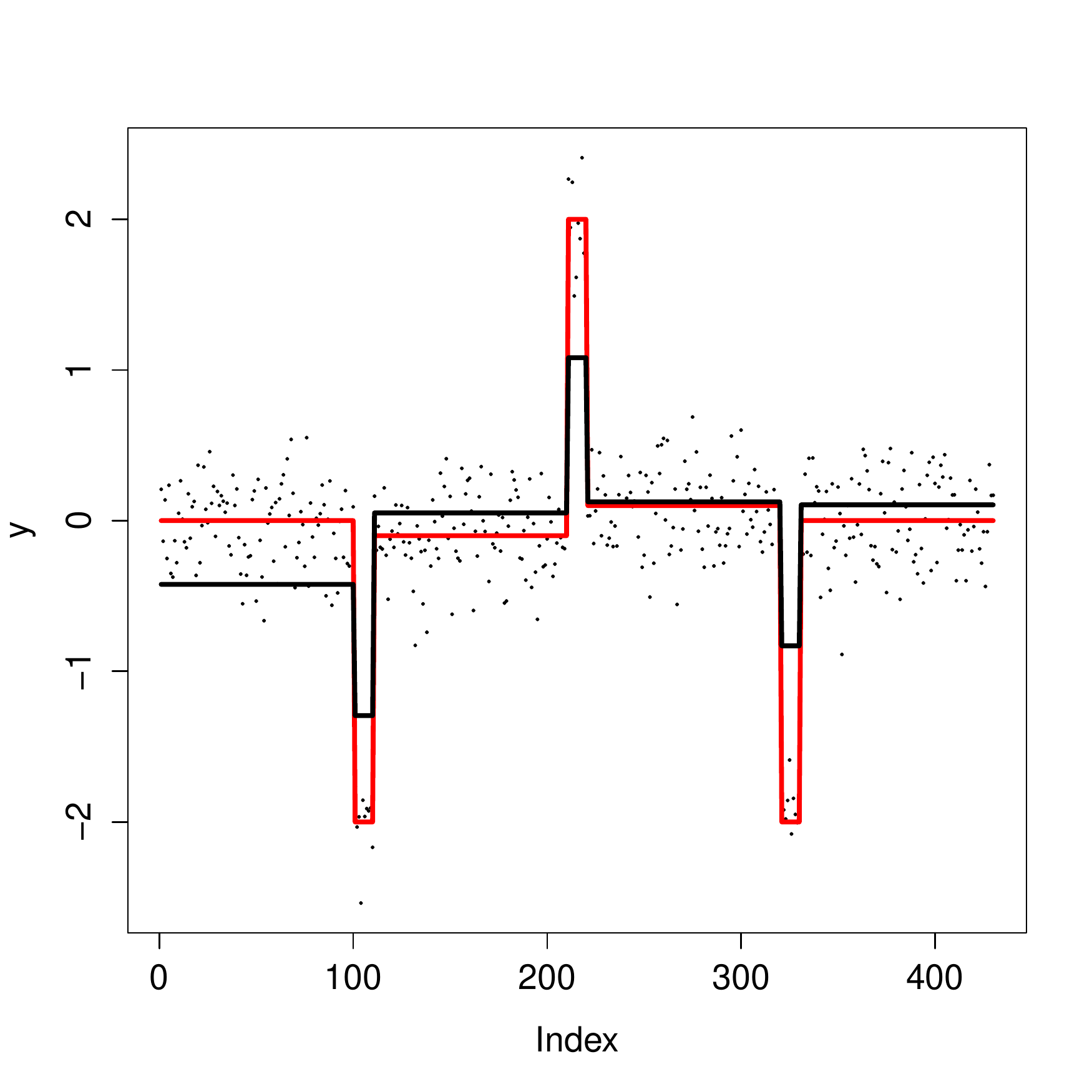}
\end{center}
\caption{\label{fig:PT-LASSO}
The preconditioned fused Lasso estimator (in black lines).  Tuning parameter is selected by  finding a solution which exactly recovers the signal pattern. {{The} red {line is} the expected signal sequence. }}
\end{figure}

Note that the reported estimate is very biased from the expected value. There is a tradeoff between the {unbiasedness} and {the quality of pattern recovery}. One possible solution for {the unbiasedness} is via {a} two{-}stage estimator-- for the first stage the signal patten is recovered and for the second stage an unbiased estimate is obtained.  %This issue is out of the scope of this paper.

The above example just gives one data set to compare the performances of the  {FLSA} and preconditioned fused Lasso. We now randomly draw {1000 datasets} and compare the approximate probability  (denoted as $P$) that there exists a $\lambda$ such that the pattern of the signals can be completely recovered. The results are as follows.
\begin{itemize}
  \item FLSA: $P \approx 0$.
  \item Preconditioned Fused Lasso $P \approx  0.926$.
\end{itemize}

This example again illustrates the strength of our algorithm in pattern recovery of {blocky} signals. Nevertheless, as intuitively, it {loses power} like other recovering algorithms when the noise level becomes stronger  {and makes} it difficult to tell the boundaries between {the} blocks. Our theorem also reflects this relationship between recovery probability and noise level. In the next example, we change $\sigma$ from $0.1$ to ${0.4}$ and compute the probability of pattern recovery. {For each $\sigma$, we randomly draw {$1000$} datasets following model described in Equation \eqref{eqn:data} and obtain the estimated probability via the proportion that    there exists a $\lambda$ such that the pattern of the signals can be completely recovered. }The estimated probabilities are reported in Figure  \ref{fig:flsa sigma}, from which {we see that the probability of pattern recovery under small noise is extremely high but this cannot hold when the signals are corrupted by stronger noise, which makes the boundaries between groups vague and hard to distinguish.}

\begin{figure}[h!]
\begin{center}
\includegraphics[scale = 0.6]{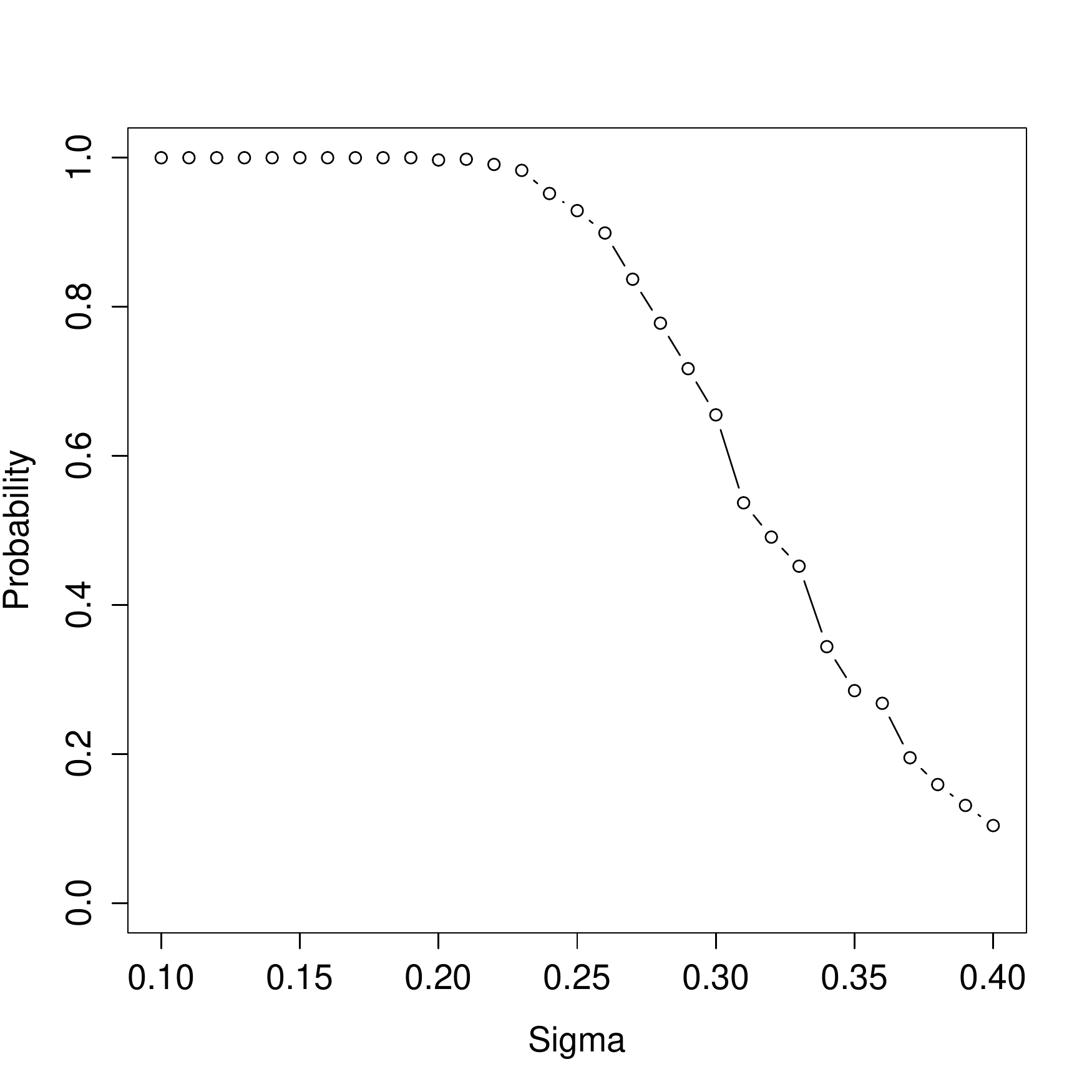}
\end{center}
\caption{\label{fig:flsa sigma}
The {estimated probability} of pattern recovery under different noise level for the  {preconditioned fused Lasso}. { Each point is estimated with {$1000$} randomly generated datasets.}}
\end{figure}

\section{Conclusions {and Discussions}}
In this paper we provided more understanding {of {the} FLSA and shed some  light on the {insight of the FLSA}}. {The} FLSA can be transformed to a standard Lasso problem. The sign recovery of the transformed Lasso problem is equivalent to  {the} pattern recovery of the  {FLSA} problem. Theoretical analysis showed that the transformed Lasso problem is not sign consisten{t} in most  {situations}. So  {the} FLSA might {also meet this consistency problem} when it is used to recover signal patterns. {To overcome such problem}, we introduced {the preconditioned fused} Lasso. We gave non-asymptotic results on the preconditioned fused Lasso. The {result} implies that when the noise is weak, the preconditioned fused Lasso can recover the signal pattern with high probability. We also found that the preconditioned fused Lasso is sensitive to the scale of the noise level. Simulation studies {confirmed} our findings.

One may argue that we only considered the pattern recovery using the fusion regularization parameter $\lambda_2$ and that the sparsity regularization parameter $\lambda_1$  {can} be used to adjust to the right pattern.  However, remember that the main purpose of introducing $\lambda_1$ is for the sparsity of the model. It is not statistically reasonable to use this regularization parameter only to recover the blocks.

If considered in the context of both sparsity and block recovery, this is impossible in most time. Using the example above, we claim that the pattern and sparsity cannot be recovered at the same time by {FLSA}. We know from \cite{friedman2007pathwise} that as long as two parameters are fused together for some $\lambda_2^{(0)}$, it will be fused for all $\lambda_2> \lambda_2^{(0)}$. This implies inversely that if two $\hat\mu_i's$ are partitioned for some big $\lambda_2$, they will not be fused for all $\tilde{\lambda_2} < \lambda_2$. {Let} us focus on the first partition as $\lambda_2$ decreases {from some large value {when} all the {estimated} parameters are fused together}. We found it happens at Point 210, which was not a jump point in the original signal sequence $\mu^*$. Then Point 209 ($\hat \mu_{209}\approx -0.063$) and Point 210($\hat \mu_{210}\approx -0.052$) can never be fused together when $\lambda_2$ goes down. The only way to make them together is to do drag both of them to zero in the soft-thresholding step. But they are nonzero signals($\mu_{209}^* = \mu_{210}^*=-0.1$) and the sparsity recovery will be clearly violated if doing so.

We claim that a good pattern recovery will facilitate things afterwards.  The  {preconditioned fused Lasso} is reliable for pattern recovery,  and so it can be incorporated into other processes -- such as  the recovery of sparsity.

%There are alternatives to make a way out. We may first recover the pattern with the aid of both $\lambda_1$ and $\lambda_2$ and then estimate separately the values based on the estimated blocks, like the way of fused adaptive Lasso in \cite{rinaldo2009properties}. Many problems remain for this algorithm such as the joint choice of $\lambda_1$ and $\lambda_2$. Compared to our  {preconditioned fused Lasso}, that is more complicated and less justified.
%As regards  {the} pattern recovery, the  {preconditioned fused Lasso} is more reliable. It can be incorporated into other processes that involves the recovery of blocks such as the fused adaptive Lasso. A good pattern recovery will facilitate things afterwards. \\

\appendix
\begin{center}
{\large Appendix}
\end{center}
We prove some of our theorems in the appendix.
\section{Proof of Theorem \ref{THM:NECESSARY CONDITION}}
We first give a well-known result that makes sure the Lasso exactly recovers the sparse pattern of $\truebeta$, that is
$\hbet =_s \truebeta$. The following Lemma gives necessary
and sufficient conditions for $\mbox{sign}(\hbet) =
\mbox{sign}(\truebeta)$, {which follows from the KKT conditions}. The proof of this lemma can be found in  {\cite{wainwright2009}}.

\begin{lemma}
\label{KKT}
For {the} linear model $Y=\X\truebeta+\epsilon$,
assume that the matrix $\XaT\Xa$ is invertible. Then for any given
$\lam>0$ and any noise term $\e\in \R^n$, there exists a Lasso
estimate $\hbet$ described in Equation (\ref{eqn:Lasso}) which satisfies $\hbet=_s\truebeta$, if and only if
the following two conditions hold
\begin{equation}
\left|\XbT\Xa(\XaT\Xa)^{-1}\left[\XaT\e-\lam
 sign(\betA)\right]-\XbT\e\right|\leq \lam, \label{R1}
\end{equation}
\begin{equation}
sign\left(\betA+(\XaT\Xa)^{-1}\left[\XaT\e-\lam
sign(\betA)\right]\right)=sign(\betA), \label{R2}
\end{equation}
where the vector inequality and equality are taken elementwise.
Moreover, if the inequality (\ref{R1}) holds strictly, then
$$\hat{\beta}=(\hbetA,0)$$ is the unique optimal solution to the Lasso problem in Equation (\ref{eqn:Lasso}), where
\begin{equation}
\hbetA=\betA+(\XaT\Xa)^{-1}\left[\XaT\e-\lam
sign(\truebeta)\right]. \label{R3}
\end{equation}
\end{lemma}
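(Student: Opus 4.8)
The plan is to characterize the minimizers of the Lasso objective $L(\beta)=\frac12\|\Y-\X\beta\|_2^2+\lam\|\beta\|_1$ through its subgradient stationarity conditions and then read off (R1)--(R2) from the requirement $\hbet=_s\truebeta$. Since $L$ is convex, a vector is a global minimizer if and only if $0$ lies in its subdifferential, i.e.\ for each column index $j$ one has $\X_j^T(\Y-\X\hbet)=\lam\, sign(\hat\beta_j)$ when $\hat\beta_j\neq0$, and $|\X_j^T(\Y-\X\hbet)|\leq\lam$ when $\hat\beta_j=0$. Everything in the forward direction is a transcription of these conditions after splitting coordinates into $S$ and $S^c$.

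First I would exploit the sign constraint to reduce to the support. If $\hbet=_s\truebeta$, then $\hat\beta_j=0$ for every $j\in S^c$, so $\X\hbet=\Xa\hat\beta_S$, while on $S$ the signs agree, $sign(\hat\beta_S)=sign(\betA)$. Writing the stationarity equation on $S$ as $\XaT(\Y-\Xa\hat\beta_S)=\lam\, sign(\betA)$ and substituting $\Y=\Xa\betA+\e$, the invertibility of $\XaT\Xa$ lets me solve explicitly, obtaining $\hat\beta_S=\betA+(\XaT\Xa)^{-1}[\XaT\e-\lam\, sign(\betA)]$, which is exactly (R3). The sign requirement $sign(\hat\beta_S)=sign(\betA)$ then becomes (R2) verbatim.

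Next I would impose the subgradient inequality on $S^c$. Substituting the computed $\hbetA$ into the residual gives $\Y-\Xa\hbetA=\e-\Xa(\XaT\Xa)^{-1}(\XaT\e-\lam\, sign(\betA))$, so that $\XbT(\Y-\Xa\hbetA)=\XbT\e-\XbT\Xa(\XaT\Xa)^{-1}(\XaT\e-\lam\, sign(\betA))$, and the coordinatewise bound $|\cdot|\leq\lam$ is precisely (R1). This establishes the forward implication. For the converse I would simply \emph{define} $\hbet=(\hbetA,0)$ through (R3): condition (R2) guarantees the signs match, so $\hbet=_s\truebeta$, while (R1) supplies the subgradient inequality on $S^c$ and stationarity on $S$ holds by construction; together these are the full KKT conditions, so $\hbet$ is a minimizer.

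The delicate step is uniqueness under strict (R1), and I expect this to be the main obstacle since the rest is a direct reading of the KKT system. Here I would first show that all minimizers share the same fitted vector $\X\hbet$: because $v\mapsto\frac12\|\Y-v\|_2^2$ is strictly convex in the prediction $v=\X\beta$ while $\|\cdot\|_1$ is convex, two minimizers with distinct fitted values would have a midpoint strictly lowering $L$, a contradiction. Given strict inequality in (R1), the common residual satisfies $|\X_j^T(\Y-\X\hbet)|<\lam$ for every $j\in S^c$, so any minimizer must vanish there (a nonzero coordinate would force equality in the KKT condition). With the support thus pinned to $S$ and $\Xa$ of full column rank (as $\XaT\Xa$ is invertible), the invariance $\Xa\hat\beta'_S=\Xa\hat\beta_S$ forces $\hat\beta'_S=\hat\beta_S$, yielding $\hbet=(\hbetA,0)$ as the unique optimal solution.
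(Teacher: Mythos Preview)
Your argument is correct and is exactly the standard KKT/subgradient derivation; the paper itself does not give a proof but simply cites \cite{wainwright2009}, whose argument is essentially the one you have written out (subgradient stationarity on $S$ and $S^c$, explicit solution on the active block, and uniqueness via invariance of the fitted vector plus full column rank of $\Xa$).
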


{\textbf{Remarks.} } As in \cite{wainwright2009}, we state an equivalent  condition for
(\ref{R1}). Define
$$\overrightarrow{b}=sign(\betA),$$ and define
$$V_j=X_j^T\left\{\Xa(\XaT\Xa)^{-1}\lam\overrightarrow{b}-\left[\Xa(\XaT\Xa)^{-1}\XaT-I)\right] {\e}\right\}.$$
By rearranging terms, it is easy to see that (\ref{R1}) holds
 if and only if
\begin{equation}
\mathcal{M}(V)=\left\{\max_{j\in S^c} |V_j|\leq \lam\right\} \label{C1}
\end{equation} holds.

{With Lemma \ref{KKT} and the above comments, now we prove Theorem \ref{THM:NECESSARY CONDITION}. Without loss of generality, assume for some $j \in S^c$ and $\zeta\geq 0$,
$$X_{j}^T\Xa\left(\Xa^T\Xa\right)^{-1}\overrightarrow{b}=1+\zeta.$$
Then
$$V_{j}=\lam(1+\zeta)+\tilde{V}_j,$$
where $\tilde{V}_j=-{X_j^T}[\Xa\left(\Xa^T\Xa\right)^{-1}\Xa^T-I]\frac{\e}{n}$ is a Gaussian random variable with mean $0$, so $P(\tilde{V}_j>0)=\frac{1}{2}$. Therefore,
$$P(V_j>\lam)\geq \frac{1}{2}$$
and the equality holds when $\zeta = 0$. This implies that for any $\lam$, Condition \eqref{R1} (a necessary
condition) is violated with probability greater than $1/2$.
}

In the proof of Theorem \ref{thm:SIC}, we need an algebra result as follows.
\begin{lemma}
\label{lemmaapp}
    For $k \geq 3, a_1, \ldots, a_k \in \mathbb{R}$ and are not equal to each other. $A = (a_{ij})_{k \times k},$ with $a_{ij} = a_\ell$ where $\ell = \text{max}\{i, j\}$. That is,
  \begin{equation*}
    A = \left(\begin{array}{cccc}
        a_1 & a_2 & \vdots & a_k \\
        a_2 & a_2 & \vdots & \vdots \\
        \cdots & \cdots & \cdot & \vdots \\
        a_k & \cdots & \cdots & a_k
        \end{array}\right)
  \end{equation*}
{  Then the inverse of $A$
 \begin{equation*}
    (A)^{-1} = \left[\begin{array}{cccccc}
       r_{11} & r_{12} &  &  &  & \\
         r_{21} & r_{22} &  r_{23} & & &\\
         & r_{32} & r_{33}  &  r_{34}  & &\\
         & & \ddots & \ddots & \ddots & \\
         & & & r_{k-1,k-2} & r_{k-1,k-1} & r_{k-1,k} \\
         & & & & r_{k,k-1} & r_{k,k}
         \end{array}\right]
\end{equation*}
where
\begin{equation*}
      r_{ij} = \begin{cases}
      \frac{1}{a_1 - a_2} & i = j = 1 \\
      -\frac{1}{a_{j-1} - a_{j}} & i = j - 1 \\
      -\frac{1}{a_j - a_{j+1}} & i = j + 1 \\
      \frac{a_{j-1} - a_{j+1}}{(a_{j-1} - a_j)(a_j - a_{j+1})} & 1 < i = j < k \\
      \frac{a_{k-1}}{(a_{k-1} - a_{k})(a_k)} & i = j = k \\
      0 & \text{otherwise}.
      \end{cases}
\end{equation*}
}
\end{lemma}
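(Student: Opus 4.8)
The plan is to verify directly that $AB = I$, where $B=(r_{ij})$ denotes the tridiagonal matrix displayed on the right-hand side; since $A$ and $B$ are square $k\times k$ matrices, establishing $AB=I$ immediately gives $B=A^{-1}$ and, in passing, the invertibility of $A$. Two structural facts make this tractable. First, $A_{ij}=a_{\max\{i,j\}}$ depends only on $\max\{i,j\}$ (in particular $A$ is symmetric). Second, $B$ is tridiagonal, so in the product $(AB)_{i\ell}=\sum_j A_{ij}\,r_{j\ell}$ only the indices $j\in\{\ell-1,\ell,\ell+1\}$ contribute. Thus each entry of $AB$ is a sum of at most three terms, and the whole claim collapses to a short list of scalar identities.

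First I would fix an interior column $1<\ell<k$ and record the three relevant entries $r_{\ell-1,\ell}=-\tfrac{1}{a_{\ell-1}-a_\ell}$, $r_{\ell,\ell}=\tfrac{a_{\ell-1}-a_{\ell+1}}{(a_{\ell-1}-a_\ell)(a_\ell-a_{\ell+1})}$, and $r_{\ell+1,\ell}=-\tfrac{1}{a_\ell-a_{\ell+1}}$. The key observation is that the coefficient vector $(A_{i,\ell-1},A_{i,\ell},A_{i,\ell+1})$ takes only a few forms as $i$ varies: it equals $(a_{\ell-1},a_\ell,a_{\ell+1})$ for every $i\le\ell-1$, it equals $(a_\ell,a_\ell,a_{\ell+1})$ when $i=\ell$, and it equals $(a_i,a_i,a_i)$ for every $i\ge\ell+1$. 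Consequently the column reduces to three cases, and I would prove exactly two scalar identities: (a) the column sum $r_{\ell-1,\ell}+r_{\ell,\ell}+r_{\ell+1,\ell}=0$, which kills every entry with $i\ge\ell+1$, and (b) the weighted sum $a_{\ell-1}r_{\ell-1,\ell}+a_\ell r_{\ell,\ell}+a_{\ell+1}r_{\ell+1,\ell}=0$, which kills every entry with $i\le\ell-1$. Both follow by placing the three fractions over the common denominator $(a_{\ell-1}-a_\ell)(a_\ell-a_{\ell+1})$ and checking that the numerator telescopes to zero. The diagonal entry then comes for free: $(AB)_{\ell\ell}$ differs from the (vanishing) $i\le\ell-1$ value only by replacing $a_{\ell-1}$ with $a_\ell$ in the first slot, so $(AB)_{\ell\ell}=(a_\ell-a_{\ell-1})\,r_{\ell-1,\ell}=(a_\ell-a_{\ell-1})\cdot\big(-\tfrac{1}{a_{\ell-1}-a_\ell}\big)=1$.

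The remaining work is the two boundary columns $\ell=1$ and $\ell=k$, which carry the special diagonal entries $r_{11}=\tfrac{1}{a_1-a_2}$ and $r_{kk}=\tfrac{a_{k-1}}{(a_{k-1}-a_k)a_k}$ and have only two nonzero entries each. These I would check by the same case split on $i$: for $\ell=1$ the identities $a_1 r_{11}+a_2 r_{21}=1$ and $r_{11}+r_{21}=0$ produce the first column, and for $\ell=k$ the identities $a_k r_{k-1,k}+a_k r_{k,k}=1$ together with $a_{k-1}r_{k-1,k}+a_k r_{k,k}=0$ produce the last column. I do not expect any genuine obstacle: the computation is purely algebraic, the distinctness hypothesis on the $a_i$ (together with $a_k\neq0$) serves only to guarantee that the displayed entries are well defined, and the one point requiring care is the bookkeeping at the boundary columns, where the interior argument does not apply verbatim because one of the three neighbors is absent. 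An alternative, more conceptual route would be to recognize $A$ as a single-pair (Green's) matrix, since $A_{ij}=1\cdot a_{\max\{i,j\}}$ has the semiseparable form $u_{\min\{i,j\}}v_{\max\{i,j\}}$ with $u\equiv1$ and $v_j=a_j$, for which a tridiagonal inverse is classical; but the direct verification above is self-contained and yields the explicit entries without appeal to external results.
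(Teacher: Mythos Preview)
Your proposal is correct and follows essentially the same approach as the paper: both verify $AB=I$ directly by exploiting the tridiagonal structure of $B$ and splitting into cases according to the position of the row index relative to the column index. Your organization is slightly cleaner in that you isolate the two scalar identities (column sum zero, weighted sum zero) and obtain the diagonal entry as their difference, whereas the paper checks the diagonal and off-diagonal cases separately, but the underlying computation is identical.
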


\begin{proof}
This lemma can be directly verified via the following equations:
\[\sum_i a_{ji}r_{ij} = 1 \ \ \mbox{and} \ \ \sum_{i} a_{\ell i}r_{ij} = 0, \ell \neq j.\]
We first verify  $\sum_i a_{ji}r_{ij} = 1 $, for all $j$.\\
When $j = 1$,
$$\sum_{i} a_{1i}r_{i1} = a_1 r_{11} + a_2 r_{21} = a1 \cdot \frac{1}{a_1 - a_2} + a_2 \cdot \frac{-1}{a_1 - a_2} = 1.$$
When $1<j<k$,
\begin{eqnarray*}
\sum_{i} a_{ji}r_{ij} &=& a_{j,j-1} r_{j-1,j} + a_{j,j} r_{j,j} + a_{j,j+1} r_{j+1,j} \\
& = & a_j \cdot \frac{-1}{a_{j-1} - a_{j}} + a_j \cdot \frac{a_{j-1} - a_{j+1}}{(a_{j-1} - a_j)(a_j - a_{j+1})} + a_{j+1} \cdot\frac{-1}{a_{j} - a_{j+1}} \\
& = & 1.
\end{eqnarray*}
When $j = k$,
\begin{eqnarray*}
  \sum_{i} a_{ji}r_{ij} & = & a_{k,k-1}  r_{k-1,k} + a_{k,k} r_{k,k} \\
    & = & a_k \cdot \frac{-1}{a_{k-1} - a_{k}} + a_k \cdot \frac{a_{k-1}}{(a_{k-1} -a _{k})a_k} \\
    & = & 1.
\end{eqnarray*}

We next verify $\sum_{i} a_{\ell i}r_{ij} = 0 \ \ \mbox{for all} \ \ \ell \neq j$. We only very the general case when there are three elements in one column of $A^{-1}$. The other verifications are the same.
$\sum_{i} a_{\ell i}r_{ij}  = a_{\ell, j-1} r_{j-1,j} + a_{\ell, j}  r_{j,j} + a_{\ell,j+1} r_{j+1,j}$.
Since $\ell \neq j$, there are only two situations we need to consider. (1) $\ell \leq j-1$ and (2) $\ell \geq j+1$. \\
When $\ell \leq j-1$,
\begin{eqnarray*}
  \sum_{i} a_{\ell i}r_{ij}  &=& a_{\ell, j-1} r_{j-1,j} + a_{\ell, j}  r_{j,j} + a_{\ell,j+1} r_{j+1,j}\\
   &=& a_{ j-1} r_{j-1,j} + a_{j}  r_{j,j} + a_{j+1} r_{j+1,j} \\
%   &=& a_{ j-1} \cdot \frac{-1}{a_{j-1} - a_j} + a_{j} \cdot \frac{a_{j-1} - a_{j+1}}{(a_{j-1} - a_j)(a_j - a_{j+1})} + a_{j+1}\cdot \frac{-1}{a_j - a_{j+1}} \\
   &=& 0.
\end{eqnarray*}
When $ \ell \geq j+1$,
\begin{eqnarray*}
  \sum_{i} a_{\ell i}r_{ij} & = & a_{\ell, j-1} r_{j-1,j} + a_{\ell, j}  r_{j,j} + a_{\ell,j+1} r_{j+1,j} \\
  &=& a_{ \ell} r_{j-1,j} + a_{\ell}  r_{j,j} + a_{\ell} r_{j+1,j} \\
  &=& a_{ \ell}\cdot [\frac{-1}{a_{j-1} - a_j} +  \frac{a_{j-1} - a_{j+1}}{(a_{j-1} - a_j)(a_j - a_{j+1})} +  \frac{-1}{a_j - a_{j+1}} ]\\
  &=& 0.
\end{eqnarray*}

\end{proof}

%\section{Proof of Theorem \ref{thm:ic}}
%
%\begin{proof}
%We treat the configuration of $\mu^*$ into the following 4 situations.
%(1) The jump points are consecutiove
%
%
%\end{proof}
\section{{Proof of Lemma \ref{LEMMA:MIN.SING}}}

To prove Lemma \ref{LEMMA:MIN.SING}, we need the following two results.

\begin{lemma}
\label{lemma:1/2}
Let  $X\in \R^{n\times n}$ be a lower triangular matrix with elements 1 on and below the diagonals and 0 in other places.
\begin{equation*}
\label{eqn:X1}
  X_{ij} = \begin{cases}
    1 & i \geq j \\
    0 & i < j.
  \end{cases}
\end{equation*}
The minimal singular value is {greater or equal to} 0.5.
\end{lemma}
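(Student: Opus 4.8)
The plan is to pass to the inverse matrix. Since $X$ is the cumulative-sum (lower-triangular all-ones) matrix, it is invertible, and the smallest singular value of $X$ equals the reciprocal of the largest singular value of $X^{-1}$; that is, $\sigma_{\min}(X) = 1/\sigma_{\max}(X^{-1})$. Hence the claim $\sigma_{\min}(X) \geq 0.5$ is equivalent to $\sigma_{\max}(X^{-1}) \leq 2$, or equivalently $\Lambda_{\max}\big((X^{-1})^T X^{-1}\big) \leq 4$. Working with $X^{-1}$ is advantageous because it is sparse: first I would verify that $B := X^{-1}$ is the bidiagonal first-difference matrix with $B_{ii} = 1$ and $B_{i,i-1} = -1$, all other entries zero (this is immediate from the fact that $X$ sends a vector to its sequence of partial sums).

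Next I would compute $B^T B$ explicitly. Since column $j$ of $B$ has, for $j < n$, its only nonzero entries equal to $1$ in row $j$ and $-1$ in row $j+1$, while column $n$ has a single nonzero entry $1$ in row $n$, a direct inner-product computation shows $B^T B$ is the tridiagonal matrix with diagonal entries $(2, 2, \ldots, 2, 1)$ — the last diagonal entry being $1$ — and with all super- and subdiagonal entries equal to $-1$. This is, up to the boundary entry, the standard discrete second-difference (Laplacian) matrix.

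The eigenvalue bound then follows from Gershgorin's circle theorem. For each interior row $i$ the Gershgorin disk is centered at $2$ with radius $|{-1}| + |{-1}| = 2$, giving the interval $[0, 4]$; for the first row the disk is $[1, 3]$; and for the last row (center $1$, radius $1$) it is $[0, 2]$. Since $B^T B$ is symmetric its eigenvalues are real and contained in the union of these disks, which lies in $[0, 4]$. Therefore $\Lambda_{\max}(B^T B) \leq 4$, so $\sigma_{\max}(X^{-1}) = \sqrt{\Lambda_{\max}(B^T B)} \leq 2$, and unwinding the reductions gives $\sigma_{\min}(X) \geq 1/2$.

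I expect the only delicate point to be the explicit identification of $X^{-1}$ and the off-boundary structure of $B^T B$ — in particular getting the final diagonal entry right (it is $1$, not $2$, because the last column of $B$ has only one nonzero entry). Everything after that is a one-line application of Gershgorin, so the bookkeeping in these two computations is where care is needed rather than any genuine difficulty.
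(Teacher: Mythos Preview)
Your argument is correct. The paper's proof and yours arrive at the same key object and the same bound, but by slightly different routes.

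The paper first forms $C_n = X^T X$, then invokes an auxiliary linear-algebra lemma (Lemma~\ref{lemmaapp}) to invert it, obtaining the tridiagonal matrix with diagonal $(1,2,\ldots,2)$ and off-diagonals $-1$; it then shows $u^T C_n^{-1} u \le 4\|u\|_2^2$ by expanding the quadratic form and applying the elementary inequality $|u_i u_{i+1}| \le \tfrac12(u_i^2+u_{i+1}^2)$. You instead invert $X$ directly to get the bidiagonal difference matrix $B$, form $B^T B$ (which has diagonal $(2,\ldots,2,1)$ and off-diagonals $-1$, the same matrix as the paper's $C_n^{-1}=BB^T$ up to a relabelling, and with identical spectrum), and bound its eigenvalues by $4$ via Gershgorin. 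Your path is a little more economical: it sidesteps the auxiliary inversion lemma entirely, and Gershgorin replaces the quadratic-form estimate in one line. The paper's approach, on the other hand, re-uses machinery (Lemma~\ref{lemmaapp}) already developed for Theorem~\ref{thm:SIC}, so in context it comes essentially for free.
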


\begin{proof}
Let $X =  (a_{ij}) \in \mathbb{R}^{n\times n}$ be the matrix satisfying the condition of the lemma. {Note that} the singular values of this matrix $X$ are the non-negative square roots of the eigenvalues of $X^T X$. {Hence} {it} suffices to show that all the eigenvalues of $X^T X$ are greater or equal to 0.25.

The explicit expression of $C_{n} = X^T X = (c_{ij}) \in \mathbb{R}^{n \times n}$ is
\begin{equation*}
  c_{ij} = n + 1 - \max\{i, j\}.
\end{equation*}

By Lemma \ref{lemmaapp}, we have

 \begin{equation*}
    C_n^{-1} = \left[\begin{array}{cccccc}
       1 & -1 &  &  &  & \\
         -1 & 2 & -1 & & &\\
         & -1 & 2  &  -1  & &\\
         & & \ddots & \ddots & \ddots & \\
         & & &-1&2 & -1 \\
         & & & & -1& 2
         \end{array}\right].
\end{equation*}

Then for any vector $u\in \R^{n\times 1}$,
\begin{eqnarray*}
u^T C_n^{-1} u &=& u_1^2 + \sum_{i=2}^n (2u_i^2) - 2\sum_{i=1}^{n-1} u_i u_{i+1}\\
&\leq& 2\sum_{i=1}^n u_i^2 - 2\sum_{i=1}^{n-1} u_i u_{i+1}\\
&\leq& 2\sum_{i=1}^n u_i^2+ 2\sum_{i=1}^{n-1} |u_i u_{i+1}|.
\end{eqnarray*}

By the fact that $\sum_{i=1}^{n-1} |u_iu_{i+1}| \leq \frac{1}{2}\sum_{i=1}^{n-1}(u_i^2 + u_{i+1}^2) \leq \sum_{i=1}^n u_i^2$, we have
\[u^T C_n^{-1} u  \leq 4 \sum_{i=1}^n u_i^2,\]
which implies that the eigenvalues of $C_n^{-1}$ {are less or equal to $4$ and thus the eigenvalues of $C_n$ are all greater or equal to 0.25.}

\end{proof}

The following lemma states the relationship between eigenvalues of second moments for centered and non-centered data.
Let $X\in \R^{n\times p}$ be a data matrix. Define the (empirical) covariance matrix of $X$ be
\[S = \frac{X_c'X_c}{n},\]
where $X_c$ is the centered version of $X$ with the $j$-th column of $X_c$ be $X_j - \bar{X}_j$.
Let the second moments of the data set $X$ be
\[T = \frac{X'X}{n}.\]
Then the eigenvalues of $S$ and $T$ have the following property.

\begin{lemma}[\cite{cadima2009relationships}]
\label{c-u}
Let $S$ be the covariance matrix for a given data set, and $T$ its corresponding matrix of non-central second moments. Let $\lambda_j(\cdot)$ be the $j$-th largest eigenvalue of a matrix. Then
\[\lambda_{j+1}(T) \leq \lambda_j(S) \leq \lambda_j(T).\]
\end{lemma}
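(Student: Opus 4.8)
The plan is to reduce the claim to the standard interlacing theorem for rank-one positive semidefinite perturbations, after establishing a clean algebraic relationship between $T$ and $S$. First I would write the centering operation as an orthogonal projection. Let $\mathbf{1}\in\R^n$ be the all-ones vector and $J=\frac{1}{n}\mathbf{1}\mathbf{1}^T$, so that the centered data matrix is $X_c=(I-J)X$. Since $I-J$ is a symmetric idempotent, $X_c'X_c=X'(I-J)X=X'X-\frac{1}{n}X'\mathbf{1}\mathbf{1}'X$. Writing $\bar x=\frac{1}{n}X'\mathbf{1}\in\R^p$ for the vector of column means, dividing by $n$ gives the identity $S=T-\bar x\bar x^T$, equivalently
\[ T=S+\bar x\bar x^T. \]
Thus $T$ is obtained from $S$ by adding the rank-one positive semidefinite matrix $\bar x\bar x^T$, and the whole lemma becomes the statement that the eigenvalues of $S$ and of $S+\bar x\bar x^T$ interlace.

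For the upper bound $\lambda_j(S)\leq\lambda_j(T)$ I would use the Courant--Fischer max--min characterization. For every $x\in\R^p$ we have $x^T T x=x^T S x+(\bar x^T x)^2\geq x^T S x$, so for any $j$-dimensional subspace $V$, $\min_{0\neq x\in V}\frac{x^T T x}{x^T x}\geq\min_{0\neq x\in V}\frac{x^T S x}{x^T x}$. Taking the maximum over all such $V$ yields $\lambda_j(T)\geq\lambda_j(S)$ immediately.

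For the lower bound $\lambda_{j+1}(T)\leq\lambda_j(S)$ I would combine the max--min characterization with a dimension count that exploits the rank-one structure. Let $W=\{x\in\R^p:\bar x^T x=0\}$, a subspace of dimension at least $p-1$ on which $x^T T x=x^T S x$. Let $U$ be the span of the top $j+1$ eigenvectors of $T$, so $\dim U=j+1$ and $\frac{x^T T x}{x^T x}\geq\lambda_{j+1}(T)$ for every $0\neq x\in U$. Then $\dim(U\cap W)\geq(j+1)+(p-1)-p=j$, and for $x\in U\cap W$ we have $\frac{x^T S x}{x^T x}=\frac{x^T T x}{x^T x}\geq\lambda_{j+1}(T)$. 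Choosing any $j$-dimensional subspace inside $U\cap W$ and applying the max--min formula for $\lambda_j(S)$ gives $\lambda_j(S)\geq\lambda_{j+1}(T)$, which completes the interlacing.

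The computation in the first paragraph is routine and the upper bound is immediate; the only place that requires care is the lower bound, where one must set up the intersection $U\cap W$ correctly and verify that its dimension is at least $j$ so that the max--min formula applies. This dimension count is the crux of the argument and is where I would be most careful, but it is entirely standard once the rank-one perturbation identity $T=S+\bar x\bar x^T$ is in hand.
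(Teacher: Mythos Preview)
Your argument is correct. You reduce the statement to the classical interlacing inequality for a rank-one positive semidefinite perturbation via the identity $T=S+\bar x\bar x^T$, then handle the two inequalities separately: the upper bound by monotonicity of the Rayleigh quotient, and the lower bound by intersecting the top $(j{+}1)$-eigenspace of $T$ with the hyperplane $\bar x^\perp$ and applying Courant--Fischer. The dimension count $\dim(U\cap W)\ge j$ is indeed the only nontrivial step, and you have it right; one should perhaps remark that the inequality $\lambda_{j+1}(T)\le\lambda_j(S)$ is only asserted for $1\le j\le p-1$, so that $U$ of dimension $j{+}1$ exists.

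The paper itself does not prove this lemma at all: it simply cites \cite{cadima2009relationships} and points to page~5 there. So there is no ``paper's own proof'' to compare against; your proposal supplies a complete self-contained argument where the paper defers to the literature. In that sense your approach is more informative for a reader who does not have the reference at hand, and it is exactly the standard proof one would expect for this kind of interlacing result.
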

Lemma \ref{c-u} can be found on page 5 in \cite{cadima2009relationships}.

With the results from Lemma \ref{lemma:1/2} and Lemma \ref{c-u}, we now prove Lemma \ref{LEMMA:MIN.SING}.
\begin{proof}
Let  $X\in \R^{n\times n}$ be a lower triangular matrix with elements 1 on and below the diagonals and 0 in other places.
\begin{equation*}
  X_{ij} = \begin{cases}
    1 & i \geq j \\
    0 & i < j.
  \end{cases}
\end{equation*}
Let $\sigma_j(\cdot)$ denote the $j$-th largest singular value of a matrix.
By Lemma \ref{lemma:1/2}, the smallest singular value is not less than 0.5, that is $\sigma_j(X) \geq 0.5, \forall j=1,\ldots,n$.  Now let $X_c$ be the centered version of $X$, then $X_c = [\mathbf{0},\tilde X]$, where
$\mathbf{0}$ is a column vector with all elements 0, and $\tilde X \in \R^{n\times (n-1)}$ as defined in Equation \eqref{eqn:centered}. Let $\sigma_j(\cdot)$ denote the $j$-th largest singular value of a matrix. By Lemma \ref{c-u}, we have
\[\sigma_{j+1}(X)\leq \sigma_j(X_c) \leq \sigma_j(X),\forall j=1,2,\ldots,n-1.\]
{In particular}, take $j=n-1$ in the above inequalities and we have $\sigma_{n-1}(X_c) \geq \sigma_{n}(X) \geq 0.5.$ Since $X_c$ is singular, the minimal singular value $\sigma_n(X_c) =0$. Therefore,
$$
    \sigma_{n-1}(\tilde X) = \sigma_{n-1}(X_c) \geq 0.5.
$$
\end{proof}

\newpage
\bibliographystyle{plainnat}
\bibliography{references}

\end{document}